\definecolor{custom_blue}{rgb}{0.1, 0.3, 0.4}
\renewcommand{\p@subfigure}{\thefigure}
\newacronym{sos}{SOS}{Sum of Squares}
\newacronym{sdp}{SDP}{Semidefinite Program}
\newacronym{psd}{PSD}{Positive Semidefinite}
\newacronym{llm}{LLM}{Large Language Model}
\newtheorem{theorem}{Theorem}
\newtheorem{remark}[theorem]{Remark}
\title{Neural Sum-of-Squares: Certifying the Nonnegativity of Polynomials with Transformers}
\newcommand\blfootnote[1]{\begingroup\renewcommand\thefootnote{}\footnotetext{#1}\addtocounter{footnote}{0}\endgroup}
\begin{document}

\maketitle
\vspace{-1.5em}

\begin{center}
    \textbf{
    Nico Pelleriti\textsuperscript{1,2} \quad
    Christoph Spiegel\textsuperscript{1} \quad
    Shiwei Liu\textsuperscript{4,5,6} \\[0.3em]
    David Martínez-Rubio\textsuperscript{3} \quad
    Max Zimmer\textsuperscript{1,2} \quad
    Sebastian Pokutta\textsuperscript{1,2}
    }\\[0.7em]
    \textsuperscript{1}Zuse Institute Berlin \quad
    \textsuperscript{2}Technical University of Berlin \quad
    \textsuperscript{3}Carlos III University of Madrid \quad
    \textsuperscript{4}ELLIS Institute Tübingen \quad
    \textsuperscript{5}Max Planck Institute for Intelligent Systems \\
    \textsuperscript{6}Tübingen AI Center \\
    \end{center}

\vspace{1em}

\thispagestyle{empty}   
\pagestyle{empty} 
\blfootnote{Our code is available at \href{https://github.com/ZIB-IOL/Neural-Sum-of-Squares}{github.com/ZIB-IOL/Neural-Sum-of-Squares}. Correspondence to \texttt{pelleriti@zib.de}.}

\begin{abstract}
    Certifying nonnegativity of polynomials is a well-known NP-hard problem with direct applications spanning non-convex optimization, control, robotics, and beyond. A sufficient condition for nonnegativity is the \gls{sos} property, i.e., it can be written as a sum of squares of other polynomials. In practice, however, certifying the \gls{sos} criterion remains computationally expensive and often involves solving a \gls{sdp}, whose dimensionality grows quadratically in the size of the monomial basis of the \gls{sos} expression; hence, various methods to reduce the size of the monomial basis have been proposed. In this work, we introduce the first learning-augmented algorithm to certify the \gls{sos} criterion. To this end, we train a Transformer model that predicts an almost-minimal monomial basis for a given polynomial, thereby drastically reducing the size of the corresponding \gls{sdp}. Our overall methodology comprises three key components: efficient training dataset generation of over 100 million \gls{sos} polynomials, design and training of the corresponding Transformer architecture, and a systematic fallback mechanism to ensure correct termination, which we analyze theoretically. We validate our approach on over 200 benchmark datasets, achieving speedups of over $100\times$ compared to state-of-the-art solvers and enabling the solution of instances where competing approaches fail. Our findings provide novel insights towards transforming the practical scalability of \gls{sos} programming.
\end{abstract}

\section{Introduction}

Global optimization of high-degree, \emph{nonconvex} polynomials underpins tasks as diverse as satellite-attitude control \citep{MISRA20207380,4586950,1272309}, energy-shaping of quadrotors, and control theory \citep{Bramburger_2024,1272309}.
Unconstrained polynomial optimization reduces to certifying nonnegativity: the value of \(\min_{\mathbf{x} \in \mathbb{R}^n} p(\mathbf{x})\) is the largest \(\gamma\) such that \(p(\mathbf{x})-\gamma \ge 0\) for all real \(\mathbf{x}\).
To illustrate this connection, consider the simple polynomial
\begin{align*}
    p(x_1, x_2) = 4x_1^4 + 12x_1^2x_2^2 + 9x_2^4 + 1.
\end{align*}
We can verify that $p(x_1, x_2) - \gamma$ is nonnegative for all $\gamma \leq 1$ by rewriting it as $(2x_1^2 + 3x_2^2)^2 + 1 - \gamma \geq 1 - \gamma$.
Since this expression is positive when $\gamma < 1$ and zero when $\gamma = 1$, we conclude that $\gamma = 1$ is the global minimum of the polynomial.
While this example admits a straightforward algebraic verification, deciding nonnegativity is \emph{NP-hard} in general, even for simple quartic polynomials such as the one above, which motivates the use of convex relaxations \citep{Ahmadi_2011}.

A widely used convex relaxation for nonnegativity certification is the \gls{sos} condition: a polynomial \(p(\mathbf{x})\) is a \gls{sos} if it can be written as \(p(\mathbf{x}) = \sum_{i=1}^r h_i(\mathbf{x})^2\) for some polynomials \(h_i(\mathbf{x})\).
Since any \gls{sos} polynomial is clearly nonnegative, this provides a sufficient condition for nonnegativity.
For our example, we used that $p(x_1, x_2) = (2x_1^2 + 3x_2^2)^2 + 1^2$, which is \gls{sos} and therefore nonnegative.

\citet{parrilo2003semidefinite} proved that this condition can be checked via an \gls{sdp}.
That is, $p(\mathbf{x})$ is \gls{sos} if and only if there exists a \gls{psd} matrix $Q$ such that $p(\mathbf{x}) = \mathbf{z}(\mathbf{x})^\top Q \, \mathbf{z}(\mathbf{x})$, where $\mathbf{z}(\mathbf{x})$ is a vector of monomials (polynomial terms like $x_1^2$, $x_1x_2$, etc.).
Crucially, \glspl{sdp} can be solved in polynomial time in the dimension of $\mathbf{z}(\mathbf{x})$, making the choice of monomial vector central to computational efficiency.

For our running example $p(x_1, x_2) = 4x_1^4 + 12x_1^2x_2^2 + 9x_2^4 + 1$, the standard monomial vector contains all monomials up to half the degree of the polynomial, yielding $\mathbf{z}(\mathbf{x}) = [1, x_1, x_2, x_1x_2, x_1^2, x_2^2]^\top$.
However, we can compute the same \gls{sos} decomposition using the much smaller monomial vector $\mathbf{z}'(\mathbf{x}) = [1, x_1^2, x_2^2]^\top$.
This gives us $p(\mathbf{x}) = \mathbf{z}'(\mathbf{x})^\top Q \, \mathbf{z}'(\mathbf{x})$ with the matrix
\begin{align*}
    Q = \begin{pmatrix}
        1 & 0 & 0 \\
        0 & 4 & 6 \\
        0 & 6 & 9
    \end{pmatrix}.
\end{align*}
Therefore, identifying a compact monomial basis $\mathbf{z}'(\mathbf{x})$ enables a drastic reduction in the size of the resulting \gls{sdp}, yielding substantial computational savings.

Unfortunately, finding such compact bases is itself challenging, and a long line of work has focused on reducing basis size \citep{reznick1978extremal, waki2006sums, 4908937, wang2021tssos}.
In our example, the commonly used \emph{Newton polytope} with diagonal consistency \citep{4908937} method (a geometric approach that identifies relevant monomials based on the polynomial's structure) would yield the basis $\mathbf{z}_N(\mathbf{x}) = [1, x_1^2, x_2^2, x_1x_2]^\top$, which is significantly smaller than the full basis $\mathbf{z}(\mathbf{x})$ but still not \emph{minimal}.\footnote{
    A basis is minimal if it is the smallest possible basis that admits an \gls{sos} decomposition.
    To the best of our knowledge, the computational complexity of finding the true minimal basis remains open, though we derive tight lower bounds on minimal basis size.
}

\begin{figure}[t]
    \centering
    \begin{subfigure}{0.8\textwidth}
        \centering
        \includegraphics[width=\textwidth]{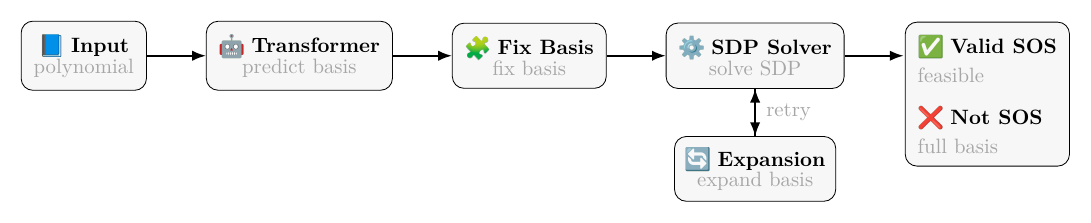}
        \caption{Schematic of the learning-augmented \gls{sos} framework.}
        \label{fig:schematic}
    \end{subfigure}
    \hfill
    \begin{subfigure}{0.19\textwidth}
        \centering
        \includegraphics[width=\textwidth]{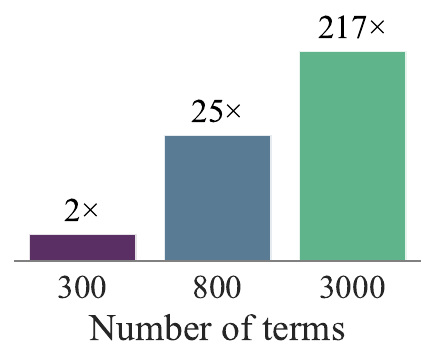}
        \caption{Speedup.}
        \label{fig:speedup_comparison}
    \end{subfigure}
    \caption{
        Overview of our approach for \gls{sos} verification.
        (i) Pipeline schematic: given a polynomial, a Transformer predicts a compact basis,  then the basis is adjusted to ensure necessary conditions are met, and an \gls{sdp} is solved with iterative expansion if needed.
        The method guarantees correctness: if a \gls{sos} certificate exists, it will be found; otherwise, infeasibility is certified at the full basis.
        (ii) Speedup over baseline for sparse polynomials with $300$, $800$, and $3000$ distinct terms.
    }
    \label{fig:framework_overview}
\end{figure}

In this paper, we address the problem of efficiently selecting compact monomial bases by introducing a learning-augmented approach to \gls{sos} basis selection.
We pose basis selection as a prediction task: given a polynomial, predict a compact monomial basis that is sufficient to express the polynomial as an \gls{sos} and yields an \gls{sdp} that is as small as possible.

To this end, we train a Transformer model \citep{vaswani2017attention} that takes a tokenized polynomial $p(\mathbf{x})$ as input and outputs a monomial basis $B$.
The training data consists of millions of \gls{sos} polynomials and (near)-minimal basis pairs, which we generate efficiently through a reverse sampling process that allows us to construct a polynomial $p(\mathbf{x})$ with a known compact \gls{sos} decomposition and monomial basis.

While our learning-based approach offers the potential for finding significantly smaller bases than rule-based methods, machine learning predictions are inherently inexact.
When our Transformer model predicts an incomplete basis—one missing essential monomials for a valid \gls{sos} decomposition—we employ a systematic repair strategy that preserves correctness guarantees.
Specifically, if the initially predicted basis $B$ cannot produce a valid \gls{sos} decomposition, we iteratively expand $B$ by adding monomials until either (1) we discover a valid decomposition, or (2) all candidate monomials have been considered, implying no \gls{sos} decomposition exists.
This fallback mechanism ensures that our method maintains the same correctness guarantees as traditional approaches: it correctly identifies whether a \gls{sos} decomposition exists, while achieving significant speedups through accurate predictions.
We summarize our contributions as follows.

\paragraph{Learning-augmented \gls{sos} programming.}
We introduce a novel learning-augmented algorithm that addresses the fundamental computational bottleneck in \gls{sos} programming.
Our approach uses a Transformer model to predict compact monomial bases for constructing \gls{sos} decompositions, directly reducing the size of the resulting \gls{sdp} problems.
By identifying sparse monomial structures that traditional rule-based methods miss, our approach achieves significant computational speedups while maintaining correctness guarantees.

\paragraph{Theoretical guarantees.}
We provide a theoretical analysis, showing that the algorithm’s worst-case computational cost exceeds the standard baseline by at most a constant factor even when the machine learning predictions are completely incorrect.
Further, we show that the computational efficiency of the algorithm is directly related to the quality of the predicted basis.

\paragraph{Empirical evaluation.}
We validate our approach across different polynomial structures on over 200 benchmark datasets, including problems with up to $100$ variables, training on over 100 million polynomials.
Our experiments demonstrate multiple orders of magnitude speedups compared to state-of-the-art baselines, while being robust to distribution shifts.

\subsection{Related Work}

\paragraph{Sum of Squares (SOS) Optimization}
\gls{sos} programming has its theoretical foundations in seminal works by \citet{parrilo2003semidefinite} and \citet{lasserre2001global}, who independently developed frameworks for converting polynomial optimization problems into \glspl{sdp}.
Subsequent research has focused on exploiting problem structure to improve computational efficiency, with notable approaches including methods based on chordal sparsity \citep{waki2006sums, 4908937} and techniques leveraging the Newton polytope \citep{reznick1978extremal} and TSSOS \citep{wang2021tssos, Wang_2021}.
Our work differs fundamentally from these structured, rule-based approaches by introducing the first learning-augmented method for \gls{sos} programming, specifically targeting the core computational bottleneck of basis selection while providing computational speedups with correctness guarantees.
More recently, \citet{li2025sos1o1r1likereasoning} explored using \glspl{llm} for \gls{sos} verification, though their approach focuses on reasoning about necessary conditions rather than constructive basis selection.

\paragraph{Machine Learning and \emph{Algorithms with Predictions} for Combinatorial Optimization}
Recent advances in deep learning have demonstrated effectiveness in addressing combinatorial optimization problems, with notable successes in routing \citep{kool2018attention} and mixed integer programming \citep{nair2020solving}.
To our knowledge, our approach represents the first application of learning-augmented algorithms to \gls{sos} programming, where the fundamental computational bottleneck lies in selecting compact monomial bases from exponentially large candidate sets.
We analyze our approach within the \emph{Algorithms with Predictions} framework \citep{roughgarden2021algorithms}, which provides theoretical foundations for quantifying the performance of learning-augmented algorithms that integrate potentially inexact predictions while maintaining worst-case guarantees.
This framework allows us to guarantee that the proposed algorithm achieves near-optimal computational cost when predictions are accurate, while ensuring bounded degradation when predictions are incorrect.
We build upon neural approaches for computational algebra developed by \citet{kera2025}, specifically adapting their polynomial tokenization and sequence generation techniques for our monomial basis prediction task.

\section{Notation and Preliminaries}
Vectors of real numbers and variables are denoted in bold, and indexed by subscripts.
We denote the set of all monomials of degree $d \in \mathbb{N}$ or less by $\mathcal{M}_d$.
For a subset $B \subseteq \mathcal{M}_d$ we denote the vector of the monomials in $B$ by $\mathbf{z}_B(\mathbf{x}) \in \mathbb{R}^{|B|}$.
Further, for a given polynomial of degree $d$ with $n$ variables $p(\mathbf{x}) \in \mathbb{R}[x_1, \ldots, x_n]$, we denote its monomial support, i.e., all monomials appearing in $p(\mathbf{x})$, by $S(p) \subseteq \mathcal{M}_d$.
A matrix $Q \in \mathbb{R}^{k \times k}$ is \gls{psd} if $\mathbf{z}^\top Q \, \mathbf{z} \ge 0$ for all $\mathbf{z} \in \mathbb{R}^k$, denoted $Q \succeq 0$.

For a \gls{sos} polynomial $p(\mathbf{x}) = \mathbf{z}_B(\mathbf{x})^\top Q \, \mathbf{z}_B(\mathbf{x})$ with $Q \succeq 0$, we call $B$ the \emph{basis} and $\mathbf{z}_B(\mathbf{x})$ the \emph{monomial basis vector}.
The Newton polytope $N(p)$ is the convex hull of exponent vectors of monomials in $p(\mathbf{x})$.
For our running example $p(x_1, x_2) = 4x_1^4 + 12x_1^2x_2^2 + 9x_2^4 + 1$, the Newton polytope $N(p)$ is the convex hull of points $\{(4,0), (2,2), (0,4), (0,0)\}$ corresponding to the monomials $x_1^4, x_1^2x_2^2, x_2^4, 1$.
For any \gls{sos} decomposition $p(\mathbf{x}) = \mathbf{z}_B(\mathbf{x})^\top Q \, \mathbf{z}_B(\mathbf{x})$, the exponents of basis monomials in $B$ must lie within the half Newton polytope $\frac{1}{2}N(p)$ \citep{reznick1978extremal}, i.e., $N(p)$ scaled by a factor of $1/2$ about the origin.
Consequently, a valid basis can be chosen from the finite set $\left\{\mathbf{x}^{\alpha} : \alpha \in \left(\tfrac{1}{2}N(p)\right) \cap \mathbb{Z}_{\ge 0}^n\right\}$, though many such monomials are typically redundant.

\section{Methodology}

We present a learning-augmented algorithm for \gls{sos} basis selection.
The key insight is that we can use a Transformer model to predict which monomials should be included in the basis, potentially finding much smaller bases than traditional methods.
However, these predictions can be incorrect.
When the predicted basis is incomplete, the resulting \gls{sdp} may be infeasible even though the polynomial has a valid \gls{sos} decomposition.
Therefore, we need a systematic repair mechanism to handle incorrect predictions while maintaining the computational speedups from accurate predictions.
Our approach addresses this challenge in three stages.

\begin{enumerate}
    \item[(i)]
    \textbf{Transformer-based basis prediction.}
    Given a polynomial $p(\mathbf{x})$, a trained Transformer model predicts a compact set of monomials likely to form a valid \gls{sos} basis.
    
    \item[(ii)]
    \textbf{Construction of a valid initial basis.}
    We verify that the predicted basis satisfies necessary structural conditions.
    If not, we employ a greedy heuristic to augment the basis until these conditions are satisfied.
    
    \item[(iii)]
    \textbf{Iterative expansion with verification.}
    We solve the \gls{sos} problem using the current basis.
    If the solver fails, we expand the basis using a learnable scoring mechanism and resolve the \gls{sdp}.
    This process iterates until either a valid \gls{sos} decomposition is found or all candidate monomials are exhausted, certifying that the polynomial is not \gls{sos}.
\end{enumerate}

\cref{fig:framework_overview} illustrates the complete workflow of our approach.
In the following sections, we describe each of these three components above in detail.

\subsection{Transformer-based Basis Prediction}

\label{sec:train_data}
\begin{wrapfigure}{r}{0.25\textwidth}
    \vspace{-20pt}
    \centering
    \includegraphics[width=0.25\textwidth]{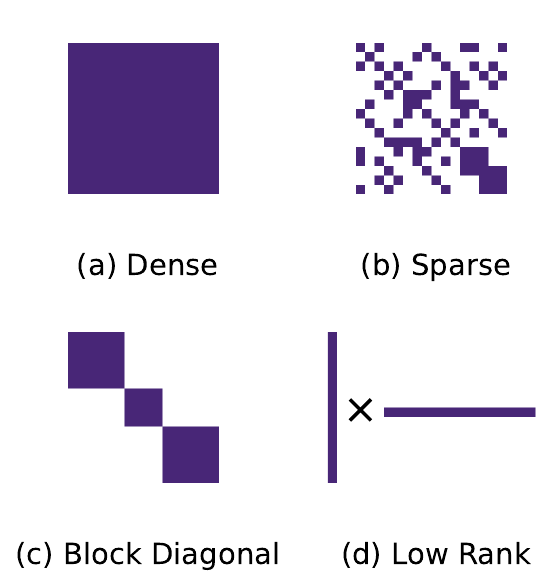}
    \caption{Sampled matrix structures.}
    \label{fig:matrix_structures}
    \vspace{-12pt}
\end{wrapfigure}

To train the Transformer, we require supervised pairs of polynomials $p(\mathbf{x})$ and their \gls{sos} bases $B \subseteq \mathcal{M}_d$.
Because finding \emph{near-minimal} \gls{sos} bases is computationally hard, we generate polynomials with \emph{known compact} bases via reverse sampling:
sample a monomial set \(B \subseteq \mathcal{M}_d\), draw \(Q \succeq 0\), and set
\(p(\mathbf{x}) = \mathbf{z}_B(\mathbf{x})^\top Q\, \mathbf{z}_B(\mathbf{x})\).
This guarantees \(B\) is a valid \gls{sos} basis for \(p\) by construction and often yields near-minimal bases that match our lower bounds in \cref{app:proofs} (cf. \cref{app:empirical_basis_size_analysis} for empirical verification, demonstrating that these bases are near-minimal).

To ensure comprehensive coverage, we vary the \gls{psd} structure of $Q$ across four categories: dense full-rank matrices, unstructured sparse matrices, block diagonal matrices, and low-rank matrices, as illustrated in  \cref{fig:matrix_structures}.
We further diversify our training data by varying the number of variables, polynomial degree, and number of terms, which yields polynomial families that span dense, sparse, and clique-structured polynomials across different problem scales.
To validate the quality of our training data, we verify that the generated bases achieve sizes close to theoretical lower bounds derived in \cref{lem:combinatorial_bound} and \cref{lem:newton_vertices}, confirming that these bases are near-minimal.
Complete details regarding parameter grids, seeds, and sample counts are provided in \cref{app:dataset_generation}.

We formulate basis prediction as a sequence generation problem.
Given a polynomial $p(\mathbf{x})$, our Transformer model predicts a monomial basis $B$ from the half Newton polytope $\frac{1}{2}N(p)$ that likely admits an \gls{sos} decomposition $p(\mathbf{x}) = \mathbf{z}_B(\mathbf{x})^\top Q \, \mathbf{z}_B(\mathbf{x})$ for some $Q \succeq 0$, i.e.
\begin{align*}
    \textsc{Transformer}(p) \mapsto B \subseteq \frac{1}{2}N(p).
\end{align*}
We tokenize polynomials using the scheme from \citet{kera2025}.
Each monomial is represented by its coefficient (prefixed with \textsc{C}) followed by the exponents of each variable (prefixed with \textsc{E}).
For our running example $p(x_1, x_2) = 4x_1^4 + 12x_1^2x_2^2 + 9x_2^4 + 1$, the tokenized input is
\begin{align*}
    \textsc{C4.0 E4 E0 + C12.0 E2 E2 + C9.0 E0 E4 + C1.0 E0 E0}.
\end{align*}
The model generates a sequence of monomials separated by \textbf{SEP} tokens, representing the predicted basis.
For our example, the target output sequence is
\begin{align*}
    \textsc{E2 E0} {\textbf{ SEP }} \textsc{E0 E2} {\textbf{ SEP }} \textsc{E0 E0},
\end{align*}
which corresponds to the minimal basis $\{x_1^2, x_2^2, 1\}$.
The model uses monomial-embeddings to map tokenized terms like \textsc{C1.0 E4 E0} to vector representations, significantly reducing sequence lengths.
We generate basis monomials autoregressively until producing an end-of-sequence token \textsc{eos}.

\subsection{Construction of a Valid Initial Basis}

Before attempting to solve the SDP with the predicted basis, we first check whether it satisfies the following necessary condition that any valid \gls{sos} basis must satisfy.

\begin{restatable}{lemma}{lemcoverage}\label{lem:coverage}
    Let $p(\mathbf{x}) \in \mathbb{R}[x_1, \ldots, x_n]$ be a polynomial and $B \subseteq \mathcal{M}_d$ be a basis.
    Then, the support of $p(\mathbf{x})$ is contained in the set of all pairwise products of monomials in $B$, formally $S(p) \subseteq B \cdot B$ where $B \cdot B = \{m_i \cdot m_j : m_i, m_j \in B\}$.
    We say that $B$ \emph{covers} $p(\mathbf{x})$ if $S(p) \subseteq B \cdot B$.
\end{restatable}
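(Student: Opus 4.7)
The plan is to unpack the definition of a valid \gls{sos} basis and then observe that every monomial that can possibly arise from the quadratic form $\mathbf{z}_B(\mathbf{x})^\top Q\, \mathbf{z}_B(\mathbf{x})$ lies in $B \cdot B$. By the definition of basis recalled in Section~2, $B$ being a basis for $p$ means there exists a \gls{psd} matrix $Q$ with $p(\mathbf{x}) = \mathbf{z}_B(\mathbf{x})^\top Q\, \mathbf{z}_B(\mathbf{x})$. Enumerating $B = \{m_1, \ldots, m_k\}$, I would expand this quadratic form as
\[
p(\mathbf{x}) \;=\; \sum_{i,j=1}^{k} Q_{ij}\, m_i(\mathbf{x})\, m_j(\mathbf{x}),
\]
so that every summand is literally a scalar multiple of some product $m_i \cdot m_j \in B \cdot B$.

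Next I would collect like terms. Group the products $(i,j)$ by the monomial they produce: for each $m \in B \cdot B$, its coefficient in $p$ equals $\sum_{(i,j) : m_i m_j = m} Q_{ij}$, while for any monomial $m \notin B \cdot B$, no pair $(i,j)$ contributes and its coefficient in the expansion is identically zero. Thus if $m \in S(p)$, i.e.\ $m$ appears in $p$ with nonzero coefficient, then $m$ must equal $m_i m_j$ for some indices $i,j$, which is exactly the statement $S(p) \subseteq B \cdot B$.

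There is no real obstacle here: the only mildly delicate point is handling possible cancellation, namely that several index pairs $(i,j)$ can map to the same monomial in $B \cdot B$, which might cause the aggregated coefficient to vanish. But this only shrinks $S(p)$ relative to $B \cdot B$ — it can never introduce monomials outside $B \cdot B$ — so the containment is preserved and the lemma follows directly. The argument is short and uses nothing beyond the definition of a basis and bilinearity of the product.
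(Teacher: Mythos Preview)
Your proposal is correct and follows essentially the same approach as the paper: both arguments expand $p(\mathbf{x}) = \mathbf{z}_B(\mathbf{x})^\top Q\, \mathbf{z}_B(\mathbf{x})$ as a sum of scalar multiples of products $m_i m_j$ and conclude that any monomial with nonzero coefficient in $p$ must lie in $B \cdot B$. Your version is in fact slightly more careful than the paper's, since you explicitly address the cancellation issue that the paper leaves implicit.
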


We defer the proof to \cref{proof:coverage}.
Continuing our running example, let $p(x_1, x_2) = 4x_1^4 + 12x_1^2x_2^2 + 9x_2^4 + 1$ with candidate basis $B = \{1, x_1^2, x_1x_2\}$.
The support is $S(p) = \{1, x_1^4, x_2^4, x_1^2x_2^2\}$, and the pairwise products are
\begin{align*}
    B \cdot B = \{1, x_1^2, x_1x_2, x_1^4, x_1^3x_2, x_1^2x_2^2\}.
\end{align*}
Since $x_2^4 \in S(p)$ but $x_2^4 \notin B \cdot B$, the basis $B$ cannot represent $p(\mathbf{x})$ as a sum of squares.

If the predicted basis does not cover all monomials in $S(p)$, we extend it using a greedy repair algorithm that adds monomials covering the most missing support terms, which we call \textsc{CoverageRepair} and detail in \cref{alg:basis_extension} in the appendix.
For our example with $B = \{1, x_1^2, x_1x_2\}$ and missing monomial $x_2^4$, we could add $x_2^2$ since $x_2^2 \cdot x_2^2 = x_2^4$, allowing us to cover the missing term.
This process continues until all monomials in $S(p)$ can be expressed as pairwise products of basis elements, ensuring that $S(p) \subseteq B \cdot B$ as required by \cref{lem:coverage}.

\begin{minipage}[t]{0.46\textwidth}
\cref{alg:valid_basis} summarizes the initial construction phase.
First, we obtain a predicted basis $B_0$ from the \textsc{Transformer} model.
Since $B_0$ may not satisfy the coverage requirement from \cref{lem:coverage},
we extend it using the \textsc{CoverageRepair} routine to ensure that all monomials in the polynomial's support $S(p)$ can be expressed as pairwise products of basis elements, yielding a basis $B_{\mathrm{cov}}$ that covers all support monomials.
We then solve the \gls{sdp} to check feasibility.
If feasible, we return the decomposition; otherwise, we proceed to expansion.
\end{minipage}
\hfill
\begin{minipage}[t]{0.5\textwidth}
\vspace{-16pt}
\begin{algorithm}[H]
    \caption{Initial Basis Construction}
    \label{alg:valid_basis}
    \begin{algorithmic}[1]
    \REQUIRE Polynomial $p$, candidate set $\frac{1}{2}N(p)$
    \STATE $B_0 \gets \textsc{Transformer}(p)$
    \STATE $B_{\mathrm{cov}} \gets \textsc{CoverageRepair}(B_0, S(p))$
    \IF{$\textsc{SolveSDP}(B_{\mathrm{cov}}, p)$ is feasible}
        \RETURN $\textbf{feasible}$
    \ELSE
        \RETURN $\textbf{infeasible}$
    \ENDIF
    \end{algorithmic}
\end{algorithm}
\vspace{-10pt}
\end{minipage}

\subsection{Iterative Basis Expansion with Verification}

If \cref{alg:valid_basis} fails to find a feasible SDP solution with the coverage-repaired basis $B_{\mathrm{cov}}$, we expand the basis by incorporating additional monomials from the candidate pool $\frac{1}{2}N(p)$.
Rather than adding monomials arbitrarily, we rank each candidate monomial by a learned \emph{score} and expand the basis in order of decreasing {score}.

To define the scoring function $\textsc{Score}(u, p)$ for a monomial $u$ given polynomial $p$, we exploit the fact that our Transformer is \emph{not} equivariant with respect to variable orderings.
That is, if we permute the variables in a polynomial (e.g., swap $x_1$ and $x_2$), the Transformer may predict a different basis for the mathematically equivalent problem.
We use \emph{permutation-based scoring}: run the Transformer on $L$ random variable permutations $\Pi=\{\pi_1,\dots,\pi_L\}$ of the variables in the polynomial, then score each monomial $u$ by its frequency across predictions:

\begin{align*}
\textsc{Score}(u, p) = \frac{1}{L} \sum_{i=1}^L \mathbf{1}[u \in \pi_i^{-1}(B_{\pi_i})],
\end{align*}
where $B_{\pi_i}$ is the basis predicted for the $\pi_i$-permuted polynomial and $\pi_i^{-1}(B_{\pi_i})$ is the basis obtained by undoing the permutation $\pi_i$ on $B_{\pi_i}$. Unless otherwise noted we set $L=4$; see \cref{app:number_of_permutations} for an ablation showing diminishing returns beyond $L=8$.

The score function creates a ranking of all monomials in the candidate set $\frac{1}{2}N(p)$, where higher scores indicate monomials that are more likely to belong to a valid \gls{sos} decomposition.
Hence, we have to call the Transformer $L$ times on different variable permutations and aggregate the results.

\begin{minipage}[t]{0.46\textwidth}
Given the polynomial $p$, coverage-repaired basis $B_{\mathrm{cov}}$ from \cref{alg:valid_basis}, expansion schedule $m_1 < m_2 < \cdots < m_k = |\frac{1}{2}N(p)|$, and the Newton polytope $\frac{1}{2}N(p)$ as our candidate set, we sort candidate monomials by score and create an ordered list starting with $B_{\mathrm{cov}}$ followed by the best candidates.
For each scheduled size $m_t$, we take the first $m_t$ monomials to form the basis and solve the \gls{sdp}.
If the \gls{sdp} is feasible, we have found a valid \gls{sos} decomposition and terminate successfully; otherwise, we try the next larger size. 
In the worst case, we solve $k$ \glspl{sdp}, with the last equivalent to the standard Newton polytope approach.
    \cref{alg:ordered_expansion} summarizes this expansion process.
\end{minipage}
\hfill
\begin{minipage}[t]{0.5\textwidth}
\vspace{-16pt}
\begin{algorithm}[H]
    \caption{Ordered Expansion}
    \label{alg:ordered_expansion}
    \begin{algorithmic}[1]
    \REQUIRE Polynomial $p$; initial basis $B_{\mathrm{cov}}$;
             schedule $m_1 < m_2 < \cdots < m_k$; candidate set $\frac{1}{2}N(p)$
    \STATE $C \gets \text{sort}(\frac{1}{2}N(p) \setminus B_{\mathrm{cov}}, \textsc{Score}(\cdot, p))$
    \STATE $M \gets (B_{\mathrm{cov}}, C)$
    \FOR{$t\in (2, \ldots, k)$}
      \STATE $B \gets M[1:m_t]$
      \IF{$\textsc{SolveSDP}(B, p)$ feasible}
        \RETURN $\textbf{feasible}$
      \ENDIF
    \ENDFOR  
    \RETURN \textbf{infeasible}
    \end{algorithmic}
    \end{algorithm}
\vspace{-25pt}
\end{minipage}

\section{Experiments}
\begin{table}[h]
    \setlength{\tabcolsep}{3.5pt}
    \small
    \centering
    \caption{
        Average results on representative configurations.
        Left: basis size (monomials).
        Right: solve time (s).
        Ours is the proposed method (with repair);
        Newton is the Newton polytope;
        Full is the dense full basis.
        MOSEK is used, except for low-rank cases where SCS is employed due to numerical issues.
        ``--'' indicates solver failure/timeout.
        Speedup is ours over Newton.
    }
    \label{tab:exp3-main}
    \begin{tabularx}{\linewidth}{l
      S[table-format=1.0] 
      S[table-format=2.0] 
      S[table-format=2.0]
      S[table-format=2.0] 
      S[table-format=3.0] S[table-format=3.0] S[table-format=5.0] 
      S[table-format=4.3] S[table-format=4.3] S[table-format=4.3]  
      S[table-format=3.2] 
      }
    \toprule
    & & & &
    & \multicolumn{3}{c}{\textbf{Average basis size}}
    & \multicolumn{3}{c}{\textbf{Average solve time (s)}}
    & \multicolumn{1}{c}{\textbf{Speedup}} \\
\cmidrule{6-8}\cmidrule{9-11}
    \textbf{Structure} & \textbf{$n$} & \textbf{$d$} & \textbf{$m$}
    & \multicolumn{1}{c}{$|B^*|$}
    & \multicolumn{1}{c}{\textbf{Ours}} & \multicolumn{1}{c}{\textbf{Newton}} & \multicolumn{1}{c}{\textbf{Full}}
    & \multicolumn{1}{c}{\textbf{Ours}} & \multicolumn{1}{c}{\textbf{Newton}} & \multicolumn{1}{c}{\textbf{Full}}
    & \\
    \midrule
    dense        & 4 & 6  & 20 & 19  & \bfseries 19  & 21 & 35     &  0.4 & \bfseries 0.28 & 0.67 & 0.7 \\
                 & 6 & 12  & 30 & 30  & \bfseries 33  & 48   & 924   & \bfseries1.01 & 2.44 & {--} & 2.42 \\
                 & 8 & 20 & 30 & 28  & 38  & \bfseries 32   & 43758 &\bfseries 3.4 & 309.4 & {--} & 91.00 \\
                 & 6 & 20 & 60 & 58  &\bfseries 89  & 236  & 26334 & \bfseries 18.3 & 1629.6 & {--} & 89.05 \\
    \cmidrule(lr){1-12}
    sparse       & 4 & 6  & 20 & 15  & \bfseries 15  & 18   & 35     & 0.23 & \bfseries 0.20 & 0.83 & 0.87 \\
                 & 6 & 12  & 30 & 26  & \bfseries 27  & 40   & 924    & \bfseries 0.57 & 1.20 & {--} & 2.11 \\
                 & 8 & 20 & 30 & 26  & \bfseries27  & 28   & 43758  & \bfseries 0.62 & 15.3 & {--} & 24.68 \\
                 & 6 & 20 & 60 & 56  & \bfseries73  & 233  & 26334  & \bfseries 7.39 & 1606.4 & {--} & 217.39 \\
    \cmidrule(lr){1-12}
    low-rank     & 4 & 6  & 20 & 19  & \bfseries 19  & 22   & 35     & 0.35 & \bfseries 0.29 & 0.68 & 0.83 \\
                 & 6 & 12  & 30 & 30  & \bfseries30  & 48   & 924    & \bfseries 0.71 & 25.4 & {--} & 35.77 \\
                 & 8 & 20 & 30 & 30  & 35  & \bfseries31   & 43758  &\bfseries 1.03 & 301.2 & {--} & 292.43 \\
                 & 6 & 20 & 60 & 57  & \bfseries 66  & 224  & 26334  &\bfseries 39.17 & 1563.1 & {--} & 39.90 \\
    \cmidrule(lr){1-12}
    block-diag.   & 4 & 6  & 20 & 19  & \bfseries 20  & 22   & 35     & 0.32 & \bfseries 0.28 & 0.67 & 0.88 \\
                 & 6 & 12  & 30 &  30  & \bfseries 31  & 48   & 924    & \bfseries 0.64 & 2.44 & {--} & 3.81 \\
                 & 8 & 20 & 30 &  30  & \bfseries 30  & 38   & 43758  & \bfseries 0.81 & 17.4 & {--} & 21.48 \\
                 & 6 & 20 & 60 & 59  & \bfseries 71  & 231  & 26334  & \bfseries 7.74 & 1331.1 & {--} & 172.02 \\
    \bottomrule
\end{tabularx}
\end{table}

We evaluate our approach through four experiments: (1) end-to-end performance vs. baselines, (2) isolation of Transformer prediction capabilities, (3) repair mechanism robustness, and (4) large-scale scalability beyond existing solver limits and distribution shifts.

\subsection{Experiment 1: Comparison to Baselines}
\begin{wrapfigure}{r}{0.45\textwidth}
    \vspace{-10pt}
    \centering
    \includegraphics[width=0.45\textwidth]{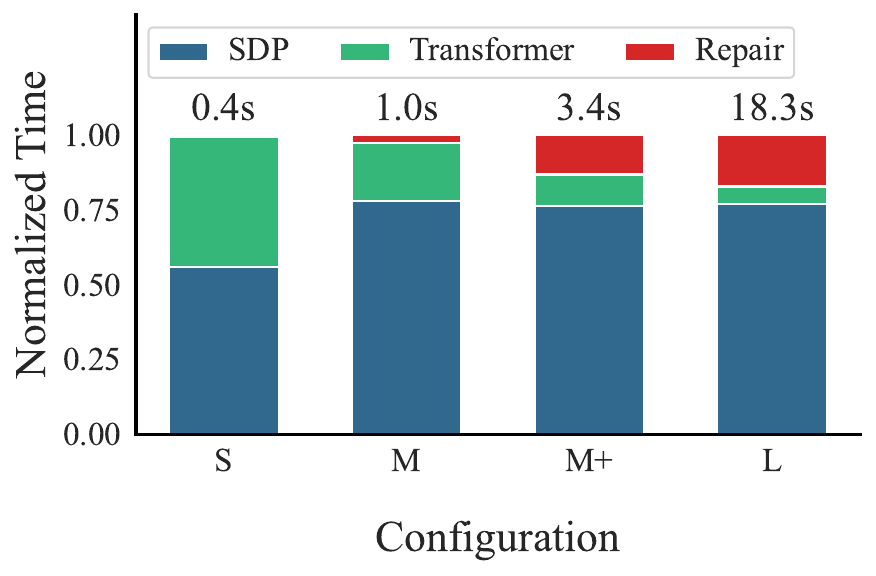}
    \caption{Time breakdown by problem size.}
    \label{fig:time_breakdown}
    \vspace{0pt}
\end{wrapfigure}
We evaluate our method against Newton polytope and full basis approaches across four polynomial structures: dense, sparse, low-rank, and block-diagonal.
Our evaluation uses $100$ test instances per configuration, with test instances ranging from $n \in \{4, 6, 8\}$ variables, degree $d \in \{6, 12, 20\}$, and average basis sizes $m$ of $20, 30,$ and $60$ monomials.
We use MOSEK \citep{mosek} for \gls{sdp} solving, with SCS \citep{scs} used for low-rank cases due to numerical issues, imposing 2-hour time limits for all instances.
We measure both basis size and total runtime.
For our method, runtime includes Transformer inference and coverage-repair overhead; for Newton polytope, it includes polytope construction; and for all methods, it includes SDP solve time.

Our approach consistently achieves the smallest bases and fastest solve times, except for the smallest test cases ($n=4$, $d=6$, $m=20$).
Speedups range from $0.7\times$ to over $300\times$, with the full-basis method timing out on larger instances.
For dense and low-rank polynomials with $8$ variables and degree $20$, our method maintains sub-60-second solve times, while Newton polytope requires over 1000 seconds per instance, mainly due to expensive computation of the Newton polytope \citep{4908937}. 
We also analyze the runtime breakdown in \cref{fig:time_breakdown} for the first four dense matrix configurations from \cref{tab:exp3-main}, which we abbreviate as S, M, M+, L.
The breakdown shows normalized times for \gls{sdp} solves, Transformer inference, and repair across problem sizes.
As problems become more complex, the Transformer inference takes less time relative to the total, while repair costs increase.
In all but the smallest problem set, \gls{sdp} solving takes up $75-80\%$ of the total time, except for the smallest problems (size S) where it only accounts for $55\%$. The time spent on Transformer inference and repair remains reasonable across all problem sizes.

\subsection{Experiment 2: Transformer Basis Prediction}
\begin{figure}[t]
    \centering
    \begin{subfigure}[b]{\textwidth}
        \centering
        \includegraphics[width=\textwidth]{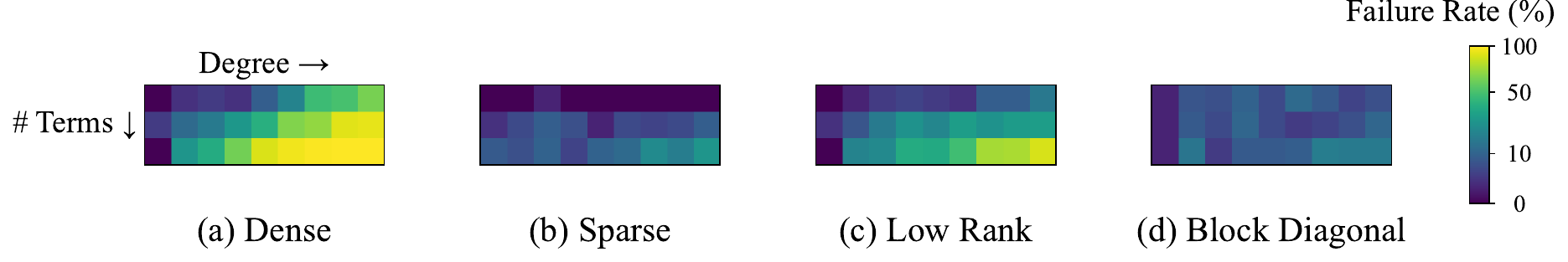}
        \caption{Heatmap of failure rates.}
        \label{fig:heatmap}
    \end{subfigure}

    \begin{subfigure}[b]{0.49\textwidth}
        \centering
        \includegraphics[width=\textwidth]{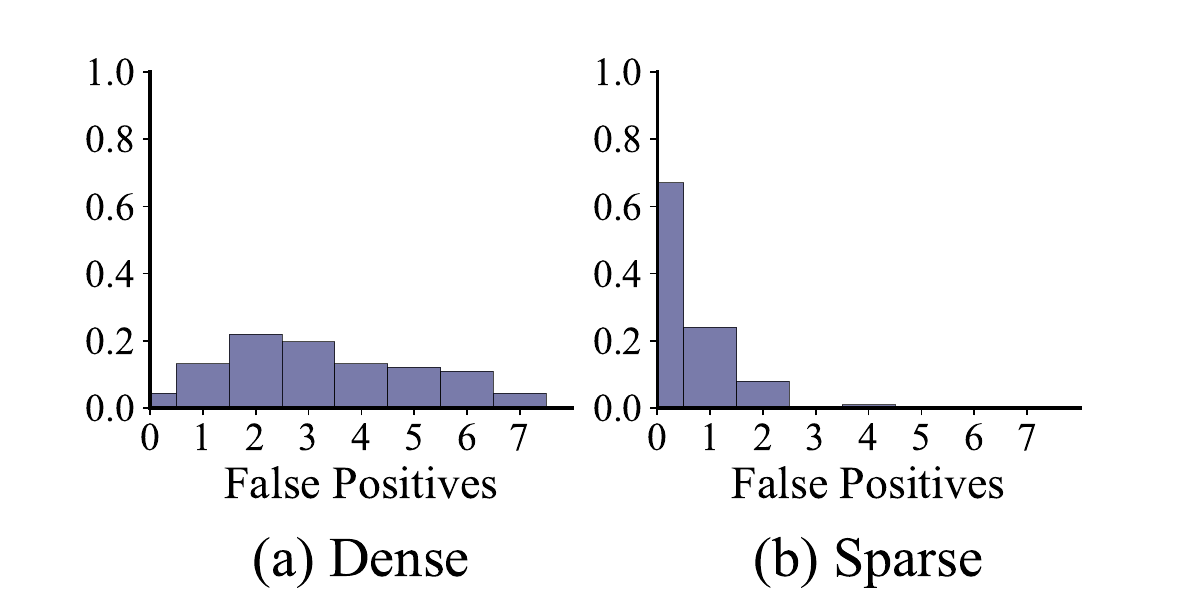}
        \caption{Histogram of false positives.}
        \label{fig:histograms}
    \end{subfigure}
    \hfill
    \begin{subfigure}[b]{0.49\textwidth}
        \centering
        \includegraphics[width=\textwidth]{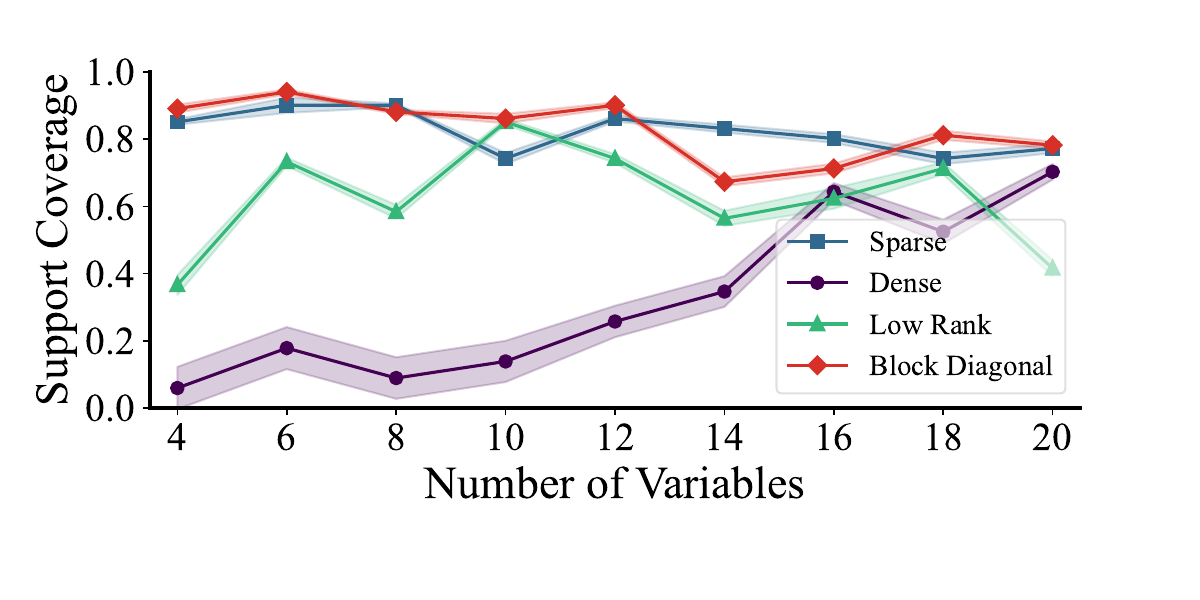}
        \caption{Support coverage vs variable count.}
        \label{fig:basis_hit_trends}
    \end{subfigure}

    \caption{
        Isolated Transformer performance across polynomial structures.
        (i) Heatmaps of failure rates over degree and structure show sparse and block-diagonal cases are easiest while dense is hardest.
        (ii) Histograms of false positives highlight error distributions per structure.
        (iii) Success probability as a function of the number of variables demonstrates monomial-embedding makes coverage consistent across variable counts.
    }
    \label{fig:experimental_results}
\end{figure}

To isolate the Transformer's performance without repair, we evaluate raw basis prediction accuracy across $1000$ generated instances spanning dense, sparse, low-rank, and block-diagonal structures with $n \in [4,16]$, $d \in [4,20]$, and average basis sizes of $10, 20,$ and $30$ monomials.

The Transformer's performance is highly structure-dependent (cf. \cref{fig:heatmap}).
Sparse and block-diagonal instances achieve success rates of $85-95\%$, demonstrating the model's ability to exploit clear structural patterns.
However, dense and low-rank matrices exhibit substantially higher failure rates (up to $100\%$ for complex instances), where cross-terms and less obvious sparsity patterns challenge the prediction mechanism.
This variability underscores the importance of repair mechanisms in ensuring robust performance across diverse polynomial families.
False positives are more common in dense and low-rank cases.
The number of variables has minimal impact on performance, except for dense cases, which we attribute to the effectiveness of the monomial-embedding approach.

\subsection{Experiment 3: Repair Mechanism}

We evaluate repair mechanisms on $n=8$ variable, degree 20 polynomials across four repair strategies: no repair, greedy repair only, permutation-based repair (using 2 permuted versions), and combined approaches.
We measure three outcomes: \emph{exact match} (predicted basis equals ground truth), \emph{superset} (predicted basis contains ground truth but includes extra monomials), and \emph{insufficient} (predicted basis missing essential monomials for valid \gls{sos} decomposition).

\cref{fig:global_repair_outcomes} shows that greedy repair achieves high exact match rates with minimal computational overhead.
Permutation-based repair reduces insufficient cases but increases superset outcomes.
The combined approach achieves the lowest insufficient rates across all polynomial structures, ensuring robust recovery from failed predictions.

\begin{figure}[t]
    \centering
    \begin{subfigure}[b]{0.48\textwidth}
        \centering
        \includegraphics[width=\textwidth]{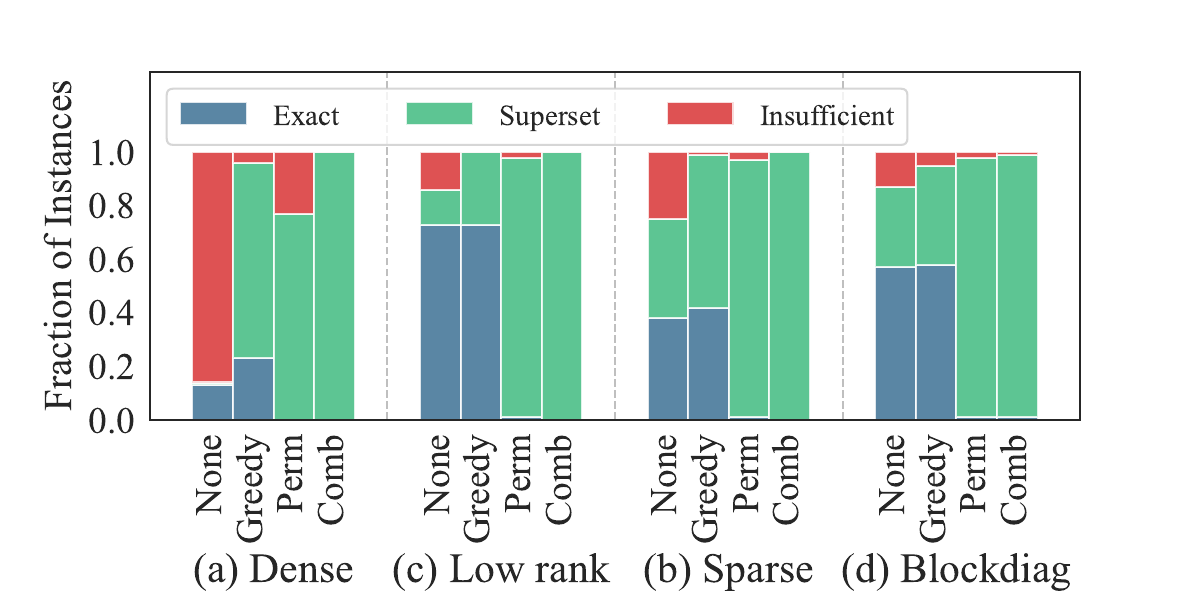}
        \caption{Global repair outcomes by polynomial structure.}
        \label{fig:global_repair_outcomes}
    \end{subfigure}
    \hfill
    \begin{subfigure}[b]{0.48\textwidth}
        \centering
        \includegraphics[width=\textwidth]{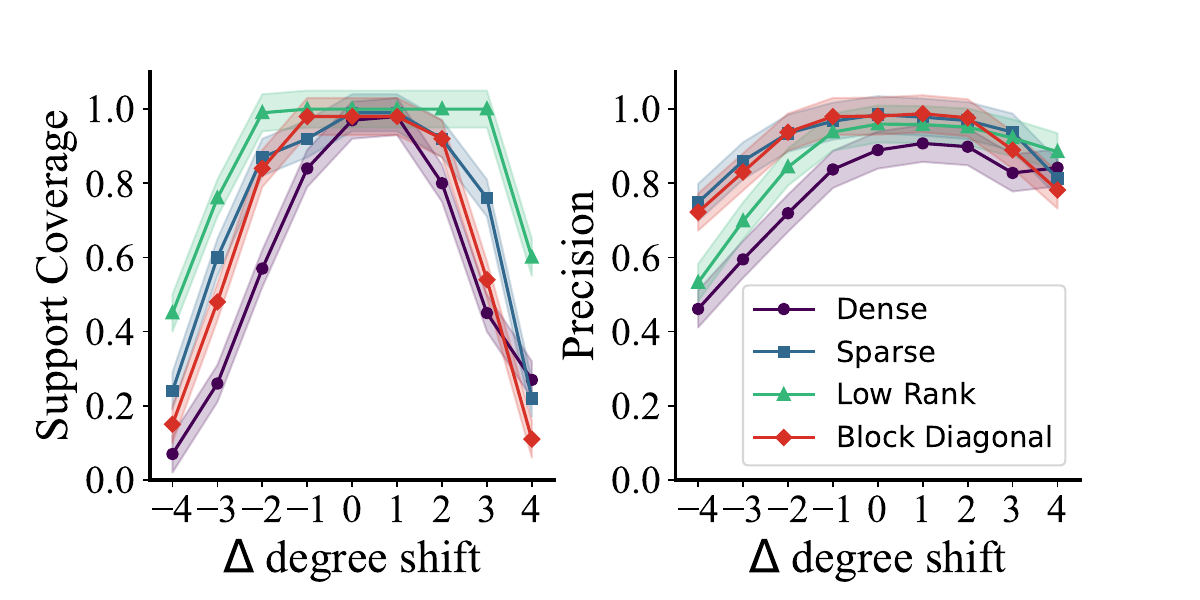}
        \caption{Precision metrics for repair mechanisms.}
        \label{fig:precision}
    \end{subfigure}

    \caption{
        Repair mechanism performance evaluation.
        (i) Global repair outcomes demonstrate the effectiveness of greedy and permutation-based repair mechanisms across polynomial structures: permutation-based approaches often generate supersets by adding many more monomials but prove less effective for dense cases, while greedy methods show more conservative monomial addition.
        (ii) Distribution shift evaluation results, indicating that the repair mechanisms are effective.
    }
    \label{fig:repair_performance}
\end{figure}

\subsection{Experiment 4: Distribution Shift and Large Scale Evaluation}
We compare our method with state-of-the-art solvers (SumOfSquares.jl \citep{weisser2019polynomial}, TSSOS \citep{wang2021tssos}, Chordal decomposition \citep{Wang_2021}) on four challenging configurations.
Experiments were run on an Intel Xeon Gold 6246 (16 cores, 512GB RAM).
Our approach achieves substantial speedups: $21\times$ faster than SoS.jl and $15\times$ faster than TSSOS on 6 variables with degree 20, and over $2000\times$ faster on 8 variables with degree 20.
For the largest instances (6 variables with degree 40 and 100 variables with degree 10), all baseline solvers encounter out-of-memory errors or timeouts, while our method succeeds in under 60 seconds.

\begin{wraptable}{r}{0.50\linewidth} 
    \vspace{-\baselineskip}            
    \setlength{\tabcolsep}{3pt}
    \footnotesize
    \centering
    \caption{
        Evaluation on large configurations.
        Entries are average time (s).
        ``--'' = OOM/timeout.
    }
    \label{tab:exp4-shift}
    \begin{tabular}{@{}lcccc@{}}      
        \toprule
        & \textbf{Ours} & \textbf{SoS.jl} & \textbf{TSSOS} & \textbf{Chordal} \\
        \midrule
        6 vars, deg 20     & 5.64 & 119.98 & 86.53 & 105.54 \\
        6 vars, deg 40     & 42.8 & -- & -- & -- \\
        8 vars, deg 20     & 1.46 & 3037.85 & 2674.50 & 3452.98 \\
        100 vars, deg 10   & 18.3 & -- & -- & -- \\
        \bottomrule
    \end{tabular}
\end{wraptable}
For distribution shift evaluation, the Transformer model maintains relatively strong performance even on degrees unseen during training.
This robustness likely stems from the monomial-embedding technique, which captures structural patterns that generalize beyond the training distribution.
Despite being trained only on degree 12 polynomials, our model can successfully predict bases for degree 20 polynomials by leveraging familiar substructures—though it has not seen complete degree 20 monomials, it has encountered their constituent parts (e.g., isolated variables like $x_1$).
This compositional generalization has been similarly observed \citep{kera2025}.

\section{Theoretical Analysis}\label{sec:theoretical_analysis}
In this section, we establish elementary guarantees for our approach. Throughout our analysis, we assume that the computational cost of solving an \gls{sdp} with basis size $m$ scales as $\Theta(m^\omega)$, where the exponent $\omega$ depends on the specific solver implementation. While the practical computational complexity of solving \glspl{sdp} also depends on the number of matrix constraints, we empirically observe that for the sparse polynomial problems we consider, the constraint count scales linearly with the basis size $m$.

The standard \gls{sos} certification approach constructs a single large \gls{sdp} using all monomials in the half Newton polytope $\tfrac{1}{2}N(p)$, incurring computational cost $\Theta(m^\omega)$ where $m = \left|\tfrac{1}{2}N(p)\right|$. 
Our method, by contrast, solves a sequence of progressively larger \glspl{sdp} with basis sizes $m_1, m_2, \ldots, m_j$, terminating upon finding a feasible \gls{sos} decomposition.
The total computational cost under our approach is $\Theta\left(\sum_{i=1}^{j} m_i^\omega\right)$, which can be substantially smaller than the standard cost when accurate predictions enable early termination with compact bases.

Given a polynomial $p(\mathbf{x})$, we define a \emph{prediction} as specifying: (i) an ordering $\nu=(u_1,\dots,u_N)$ over the half Newton polytope $\frac{1}{2}N(p)$, and (ii) a schedule $\sigma=(m_1<\cdots<m_k)$ of basis sizes to attempt.
We measure the prediction quality through the \emph{coverage rank} $\eta$, defined as the smallest index $j$ such that $\{u_1,\dots,u_j\}$ is a monomial \gls{sos} basis for the polynomial $p(\mathbf{x})$.

\begin{remark}\label{rem:cost_model}
This cost model provides several straightforward insights.
In the best case, when the coverage rank equals the minimal basis size and $B_{\text{cov}}$ from \cref{alg:valid_basis} is minimal, our method finds the certificate in a single \gls{sdp} solve at minimal cost. In the worst case, when the coverage rank equals $\left|\frac{1}{2}N(p)\right|$, our method requires $k$ \gls{sdp} solves with basis sizes $m_1, \ldots, m_k$, incurring at most $O(k)$ times the cost of the standard approach.
\end{remark}

We conclude with a natural choice of expansion schedule, which is to increase the basis size by a constant factor $\rho > 1$ at each iteration and relate this to the coverage rank.

\begin{restatable}[Geometric Expansion]{lemma}{geometriclemma}
    \label{lem:geometric}
    Let $B_{\text{cov}}$ be a coverage-repaired \gls{sos} basis obtained by \cref{alg:valid_basis}.
    Let the coverage rank be $\eta$, set $m_1 = \left|B_{\text{cov}}\right|$ and choose $\rho > 1$.
    Then, \cref{alg:ordered_expansion} performs at most $1+\lceil \log_{\rho}(\eta/m_1)\rceil$ \gls{sdp} solves and has total cost at most $\Theta\left(\tfrac{\rho^{\omega}}{1-\rho^{-\omega}}\,\eta^{\omega}\right)$.
\end{restatable}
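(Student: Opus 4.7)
The plan is a direct arithmetic argument, broken into three pieces: (i) argue that iteration $t^{*}$ at which Algorithm~\ref{alg:ordered_expansion} returns is determined by the coverage rank; (ii) bound $t^{*}$ by a logarithm of $\eta/m_{1}$; (iii) bound the geometric sum of per-iteration SDP costs.

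\smallskip
\textbf{Step 1: termination at coverage rank.} By definition of the coverage rank, the first $\eta$ monomials in the ordering used by Algorithm~\ref{alg:ordered_expansion} (namely $B_{\text{cov}}$ followed by the score-sorted candidates) form a valid monomial \gls{sos} basis for $p$. Hence the \gls{sdp} solved at any iteration $t$ with $m_{t}\ge\eta$ is feasible, so the algorithm returns at the first iteration $t^{*}$ with $m_{t^{*}}\ge\eta$. Note $m_{1}\le\eta$ by definition of $\eta$ (since $B_{\text{cov}}$ is the initial prefix).

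\smallskip
\textbf{Step 2: counting iterations.} Under the geometric schedule $m_{t}=\lceil m_{1}\rho^{t-1}\rceil$, the condition $m_{t}\ge\eta$ is implied by $m_{1}\rho^{t-1}\ge\eta$, i.e. $t-1\ge\log_{\rho}(\eta/m_{1})$. Therefore
\begin{equation*}
    t^{*}\;\le\;1+\lceil\log_{\rho}(\eta/m_{1})\rceil,
\end{equation*}
which gives the first claim.

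\smallskip
\textbf{Step 3: bounding the cumulative cost.} Using the cost model $\Theta(m^{\omega})$ per \gls{sdp} of basis size $m$, the total cost is
\begin{equation*}
    \sum_{t=1}^{t^{*}}\Theta(m_{t}^{\omega})\;=\;\Theta\!\left(m_{1}^{\omega}\sum_{t=1}^{t^{*}}\rho^{\omega(t-1)}\right)
    \;=\;\Theta\!\left(m_{1}^{\omega}\rho^{\omega(t^{*}-1)}\cdot\frac{1-\rho^{-\omega t^{*}}}{1-\rho^{-\omega}}\right).
\end{equation*}
Since $\lceil\log_{\rho}(\eta/m_{1})\rceil<\log_{\rho}(\eta/m_{1})+1$, we have $\rho^{t^{*}-1}\le\rho\cdot\eta/m_{1}$, hence $m_{1}^{\omega}\rho^{\omega(t^{*}-1)}\le\rho^{\omega}\eta^{\omega}$. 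Dropping the negligible factor $(1-\rho^{-\omega t^{*}})\le 1$, the total cost is at most
\begin{equation*}
    \Theta\!\left(\frac{\rho^{\omega}}{1-\rho^{-\omega}}\,\eta^{\omega}\right),
\end{equation*}
matching the claim.

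\smallskip
\textbf{Anticipated obstacles.} The argument is essentially routine book-keeping. The only subtle point is handling the rounding $m_{t}=\lceil m_{1}\rho^{t-1}\rceil$: the lower bound $m_{t}\ge m_{1}\rho^{t-1}$ used in Step~2 survives rounding up, while the upper bound in Step~3 only loses an additional constant factor absorbed into the $\Theta(\cdot)$. A second minor care-point is ensuring $m_{1}\le\eta$ (so that the logarithm is nonnegative); this holds because the first $m_{1}$ monomials of the ordering are precisely $B_{\text{cov}}$, and if this were already a valid \gls{sos} basis the algorithm would terminate at $t=1$ with $\eta=m_{1}$, consistent with the stated bound $1+\lceil\log_{\rho} 1\rceil=1$.
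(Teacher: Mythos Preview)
Your proposal is correct and follows essentially the same route as the paper's proof: both use the lower bound $m_{t}\ge m_{1}\rho^{t-1}$ to count iterations, then bound the cumulative cost by a geometric sum dominated by its last term, using $m_{t^{*}}\le\rho\eta$ (the paper phrases this as $m_{j-1}<\eta\Rightarrow m_{j}\le\rho\eta$ and sums ``downward'' via $m_{t}\le m_{j}/\rho^{j-t}$, which is the same estimate). The only cosmetic difference is that the paper defines the schedule recursively as $m_{i+1}=\lceil\rho m_{i}\rceil$ rather than your $m_{t}=\lceil m_{1}\rho^{t-1}\rceil$, but both satisfy the needed inequalities and the argument is unchanged.
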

\section{Conclusion and Limitations}

This paper introduces a learning-augmented algorithm for \gls{sos} programming that achieves significant speedups while maintaining theoretical guarantees.
By framing basis selection as a sequence prediction task, we bridge the gap between practical efficiency and mathematical rigor.
Our approach demonstrates robustness across different polynomial families and variable counts, suggesting the learned patterns generalize beyond the training distribution.
The systematic repair mechanism enables the algorithm to recover from initial prediction errors, while the Transformer model learns to exploit sparsity patterns that traditional methods miss.
These capabilities combine to deliver consistent performance improvements across diverse problem instances.

Nonetheless, our approach has limitations.
While we have extensive synthetic datasets covering diverse polynomial structures, the absence of widely available real-world \gls{sos} benchmarks means we rely on synthetic polynomial generation for evaluation; access to real-world \gls{sos} problems would strengthen our empirical validation.

\section*{Acknowledgements}
This research was partially supported by the DFG Cluster of Excellence MATH+ (EXC-2046/1, project id 390685689) funded by the Deutsche Forschungsgemeinschaft (DFG) as well as by the German Federal Ministry of Research, Technology and Space (fund number 16IS23025B).

\bibliographystyle{iclr2026_conference}
\bibliography{references}

\begin{thebibliography}{22}
\providecommand{\natexlab}[1]{#1}
\providecommand{\url}[1]{\texttt{#1}}
\expandafter\ifx\csname urlstyle\endcsname\relax
  \providecommand{\doi}[1]{doi: #1}\else
  \providecommand{\doi}{doi: \begingroup \urlstyle{rm}\Url}\fi

\bibitem[Ahmadi et~al.(2011)Ahmadi, Olshevsky, Parrilo, and Tsitsiklis]{Ahmadi_2011}
Amir~Ali Ahmadi, Alex Olshevsky, Pablo~A. Parrilo, and John~N. Tsitsiklis.
\newblock Np-hardness of deciding convexity of quartic polynomials and related problems.
\newblock \emph{Mathematical Programming}, 137\penalty0 (1–2):\penalty0 453–476, November 2011.
\newblock ISSN 1436-4646.
\newblock \doi{10.1007/s10107-011-0499-2}.
\newblock URL \url{http://dx.doi.org/10.1007/s10107-011-0499-2}.

\bibitem[ApS(2025)]{mosek}
MOSEK ApS.
\newblock \emph{The MOSEK Python Fusion API manual. Version 11.0.}, 2025.
\newblock URL \url{https://docs.mosek.com/latest/pythonfusion/index.html}.

\bibitem[Bramburger et~al.(2024)Bramburger, Dahdah, and Forbes]{Bramburger_2024}
Jason~J. Bramburger, Steven Dahdah, and James~Richard Forbes.
\newblock Synthesizing control laws from data using sum-of-squares optimization.
\newblock In \emph{2024 IEEE Conference on Control Technology and Applications (CCTA)}, pp.\  505–510. IEEE, August 2024.
\newblock \doi{10.1109/ccta60707.2024.10666531}.
\newblock URL \url{http://dx.doi.org/10.1109/CCTA60707.2024.10666531}.

\bibitem[Gollu(2008)]{4586950}
Narendra Gollu.
\newblock Switched control of satellites for global stabilization and local performance: A sum of squares approach.
\newblock In \emph{2008 American Control Conference}, pp.\  2987--2992, 2008.
\newblock \doi{10.1109/ACC.2008.4586950}.

\bibitem[Jarvis-Wloszek et~al.(2003)Jarvis-Wloszek, Feeley, Tan, Sun, and Packard]{1272309}
Z.~Jarvis-Wloszek, R.~Feeley, Weehong Tan, Kunpeng Sun, and A.~Packard.
\newblock Some controls applications of sum of squares programming.
\newblock In \emph{42nd IEEE International Conference on Decision and Control (IEEE Cat. No.03CH37475)}, volume~5, pp.\  4676--4681 Vol.5, 2003.
\newblock \doi{10.1109/CDC.2003.1272309}.

\bibitem[Kera et~al.(2025)Kera, Pelleriti, Ishihara, Zimmer, and Pokutta]{kera2025}
Hiroshi Kera, Nico Pelleriti, Yuki Ishihara, Max Zimmer, and Sebastian Pokutta.
\newblock Computational algebra with attention: Transformer oracles for border basis algorithms, 2025.

\bibitem[Kool et~al.(2018)Kool, van Hoof, and Welling]{kool2018attention}
Wouter Kool, Herke van Hoof, and Max Welling.
\newblock Attention, learn to solve routing problems!
\newblock In \emph{International Conference on Learning Representations}, 2018.

\bibitem[Lasserre(2001)]{lasserre2001global}
Jean~B Lasserre.
\newblock Global optimization with polynomials and the problem of moments.
\newblock \emph{SIAM Journal on Optimization}, 11\penalty0 (3):\penalty0 796--817, 2001.

\bibitem[Li et~al.(2025)Li, Zhu, Cartis, Ji, and Liu]{li2025sos1o1r1likereasoning}
Kechen Li, Wenqi Zhu, Coralia Cartis, Tianbo Ji, and Shiwei Liu.
\newblock Sos1: O1 and r1-like reasoning llms are sum-of-square solvers, 2025.
\newblock URL \url{https://arxiv.org/abs/2502.20545}.

\bibitem[Lofberg(2009)]{4908937}
Johan Lofberg.
\newblock Pre- and post-processing sum-of-squares programs in practice.
\newblock \emph{IEEE Transactions on Automatic Control}, 54\penalty0 (5):\penalty0 1007--1011, 2009.
\newblock \doi{10.1109/TAC.2009.2017144}.

\bibitem[Misra et~al.(2020)Misra, Wisniewski, and Özkan Karabacak]{MISRA20207380}
Rahul Misra, Rafał Wisniewski, and Özkan Karabacak.
\newblock Sum-of-squares based computation of a lyapunov function for proving stability of a satellite with electromagnetic actuation⁎⁎this work has been supported by the independent research fund denmark in the project debate.
\newblock \emph{IFAC-PapersOnLine}, 53\penalty0 (2):\penalty0 7380--7385, 2020.
\newblock ISSN 2405-8963.
\newblock \doi{https://doi.org/10.1016/j.ifacol.2020.12.1264}.
\newblock URL \url{https://www.sciencedirect.com/science/article/pii/S2405896320316669}.
\newblock 21st IFAC World Congress.

\bibitem[Nair et~al.(2020)Nair, Bartunov, Gimeno, von Glehn, Lichocki, Lobov, O'Donoghue, Sonnerat, Tjandraatmadja, Wang, et~al.]{nair2020solving}
Vinod Nair, Sergey Bartunov, Felix Gimeno, Ingrid von Glehn, Pawel Lichocki, Ivan Lobov, Brendan O'Donoghue, Nicolas Sonnerat, Christian Tjandraatmadja, Pengming Wang, et~al.
\newblock Solving mixed integer programs using neural networks.
\newblock \emph{arXiv preprint arXiv:2012.13349}, 2020.

\bibitem[O'Donoghue et~al.(2016)O'Donoghue, Chu, Parikh, and Boyd]{scs}
Brendan O'Donoghue, Eric Chu, Neal Parikh, and Stephen Boyd.
\newblock Conic optimization via operator splitting and homogeneous self-dual embedding.
\newblock \emph{Journal of Optimization Theory and Applications}, 169\penalty0 (3):\penalty0 1042--1068, June 2016.
\newblock URL \url{http://stanford.edu/~boyd/papers/scs.html}.

\bibitem[Parrilo(2003)]{parrilo2003semidefinite}
Pablo~A Parrilo.
\newblock Semidefinite programming relaxations for semialgebraic problems.
\newblock \emph{Mathematical Programming}, 96\penalty0 (2):\penalty0 293--320, 2003.

\bibitem[P{\'o}lik \& Terlaky(2009)P{\'o}lik and Terlaky]{polik2009stopping}
Imre P{\'o}lik and Tam{\'a}s Terlaky.
\newblock New stopping criteria for detecting infeasibility in conic optimization.
\newblock \emph{Optimization Letters}, 3\penalty0 (2):\penalty0 187--198, 2009.
\newblock \doi{10.1007/s11590-008-0100-y}.

\bibitem[Reznick(1978)]{reznick1978extremal}
Bruce Reznick.
\newblock Extremal psd forms with few terms.
\newblock \emph{Duke Mathematical Journal}, 45\penalty0 (2):\penalty0 363--374, 1978.

\bibitem[Roughgarden(2021)]{roughgarden2021algorithms}
Tim Roughgarden.
\newblock Algorithms with predictions.
\newblock \emph{arXiv preprint arXiv:2006.14132}, 2021.

\bibitem[Vaswani et~al.(2017)Vaswani, Shazeer, Parmar, Uszkoreit, Jones, Gomez, Kaiser, and Polosukhin]{vaswani2017attention}
Ashish Vaswani, Noam Shazeer, Niki Parmar, Jakob Uszkoreit, Llion Jones, Aidan~N Gomez, {\L}ukasz Kaiser, and Illia Polosukhin.
\newblock Attention is all you need.
\newblock \emph{Advances in Neural Information Processing Systems}, 30, 2017.

\bibitem[Waki et~al.(2006)Waki, Kim, Kojima, and Muramatsu]{waki2006sums}
Hayato Waki, Sunyoung Kim, Masakazu Kojima, and Masakazu Muramatsu.
\newblock Sums of squares and semidefinite program relaxations for polynomial optimization problems with structured sparsity.
\newblock \emph{SIAM Journal on Optimization}, 17\penalty0 (1):\penalty0 218--242, 2006.

\bibitem[Wang et~al.(2021{\natexlab{a}})Wang, Magron, and Lasserre]{Wang_2021}
Jie Wang, Victor Magron, and Jean-Bernard Lasserre.
\newblock Chordal-tssos: A moment-sos hierarchy that exploits term sparsity with chordal extension.
\newblock \emph{SIAM Journal on Optimization}, 31\penalty0 (1):\penalty0 114–141, January 2021{\natexlab{a}}.
\newblock ISSN 1095-7189.
\newblock \doi{10.1137/20m1323564}.

\bibitem[Wang et~al.(2021{\natexlab{b}})Wang, Magron, and Lasserre]{wang2021tssos}
Jie Wang, Victor Magron, and Jean-Bernard Lasserre.
\newblock Tssos: A moment-sos hierarchy that exploits term sparsity.
\newblock \emph{SIAM Journal on Optimization}, 31\penalty0 (1):\penalty0 30--58, 2021{\natexlab{b}}.
\newblock \doi{10.1137/19M1307871}.

\bibitem[Weisser et~al.(2019)Weisser, Legat, Coey, Kapelevich, and Vielma]{weisser2019polynomial}
Tillmann Weisser, Beno{\^\i}t Legat, Chris Coey, Lea Kapelevich, and Juan~Pablo Vielma.
\newblock Polynomial and moment optimization in julia and jump.
\newblock In \emph{JuliaCon}, 2019.
\newblock URL \url{https://pretalx.com/juliacon2019/talk/QZBKAU/}.

\end{thebibliography}

\newpage
\appendix

\section{Implementation Details}\label{app:implementation_details}

\subsection{Software}

We implemented our approach using a combination of Python and Julia.
\cref{tab:software_versions} lists the key software packages and their versions used in our implementation.

\begin{table}[h]
\centering
\caption{Software packages and versions used in our implementation.}
\label{tab:software_versions}
\begin{tabular}{@{}ll@{}}
\toprule
Package & Version \\
\midrule
Python & 3.9.19 \\
Julia & 1.11.5 \\
Mosek & 11.0.22 \\
SumOfSquares (Julia) & 0.7.4 \\
CVXPY & 1.6.5 \\
SCS & 3.2.7 \\
NumPy & 2.0.2 \\
\bottomrule
\end{tabular}
\end{table}

Experiments were run on an Intel Xeon Gold 6246 (16 cores, 512GB RAM) and an NVIDIA A40 (48GB RAM).

\subsection{Model Training}
We use a standard encoder-decoder Transformer with 6 layers each, model dimension 512, and 8 attention heads. Key training hyperparameters: batch size 128, learning rate $1 \times 10^{-4}$, AdamW optimizer with weight decay $1 \times 10^{-4}$, and 1 epoch. The model jointly trains basis selection (cross-entropy loss) and coefficient prediction (MSE loss).

\begin{table}[h]
\centering
\caption{Transformer model configuration for Grid 1, 2, and 3.}
\label{tab:model_config}
\begin{tabular}{@{}ll@{}}
\toprule
Parameter & Value \\
\midrule
Model dimension ($d_{\text{model}}$) & 512 \\
Attention heads & 8 \\
Encoder/decoder layers & 6 each \\
Batch size & 128 \\
Learning rate & $1 \times 10^{-4}$ \\
Optimizer & AdamW \\
Weight decay & $1 \times 10^{-4}$ \\
Training epochs & 1 \\
\bottomrule
\end{tabular}
\end{table}

\begin{table}[h]
\centering
\caption{Transformer model configuration for Grid 4.}
\label{tab:extra_large_model_config}
\begin{tabular}{@{}ll@{}}
\toprule
Parameter & Value \\
\midrule
Model dimension ($d_{\text{model}}$) & 1024 \\
Attention heads & 16 \\
Encoder layers & 12 \\
Decoder layers & 8 \\
Batch size & 8 \\
Learning rate & $1 \times 10^{-4}$ \\
Optimizer & AdamW \\
Weight decay & $1 \times 10^{-4}$ \\
Warmup ratio & 0.05 \\
Training epochs & 1 \\
\bottomrule
\end{tabular}
\end{table}

\begin{table}[h]
\centering
\caption{Transformer model configuration for experiments 100 variables.}
\label{tab:large_model_config}
\begin{tabular}{@{}ll@{}}
\toprule
Parameter & Value \\
\midrule
Model dimension ($d_{\text{model}}$) & 768 \\
Attention heads & 12 \\
Encoder/decoder layers & 6 each \\
Batch size & 64 \\
Learning rate & $1 \times 10^{-4}$ \\
Optimizer & AdamW \\
Weight decay & $1 \times 10^{-4}$ \\
Training epochs & 1 \\
\bottomrule
\end{tabular}
\end{table}

\section{Additional Figures}

\begin{figure}[h]
    \centering
    \begin{subfigure}[t]{0.48\textwidth}
        \centering
        \includegraphics[width=\textwidth]{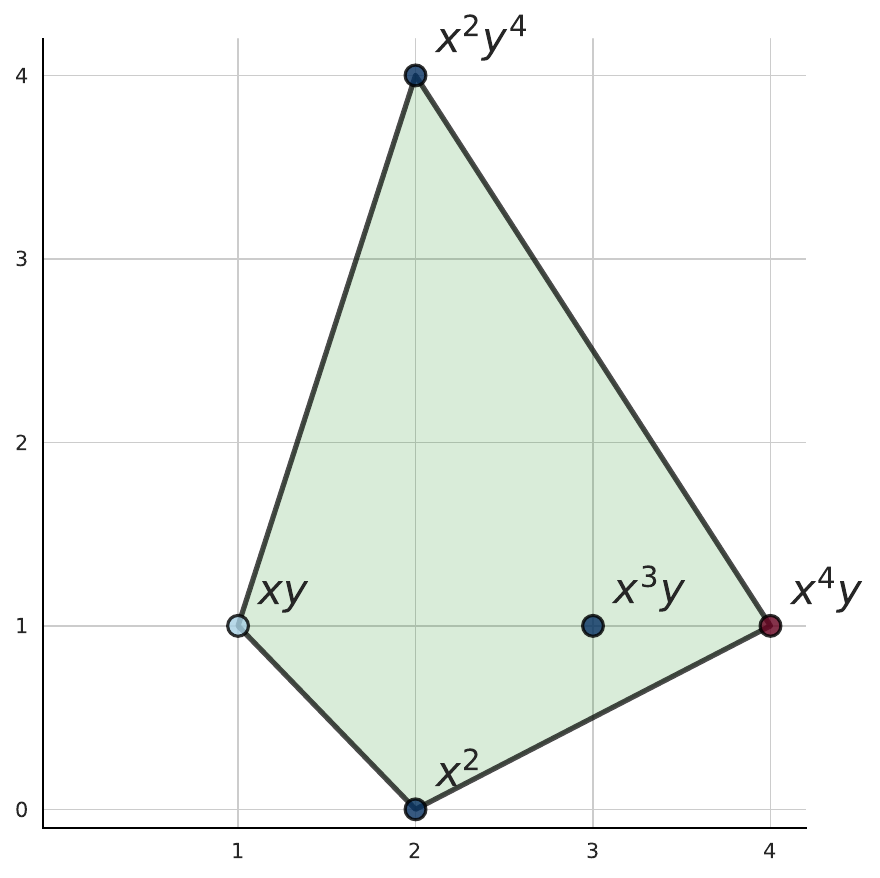}
        \caption{Newton polytope.}
        \label{fig:newton_polytope}
    \end{subfigure}
    \hfill
    \begin{subfigure}[t]{0.48\textwidth}
        \centering
        \includegraphics[width=\textwidth]{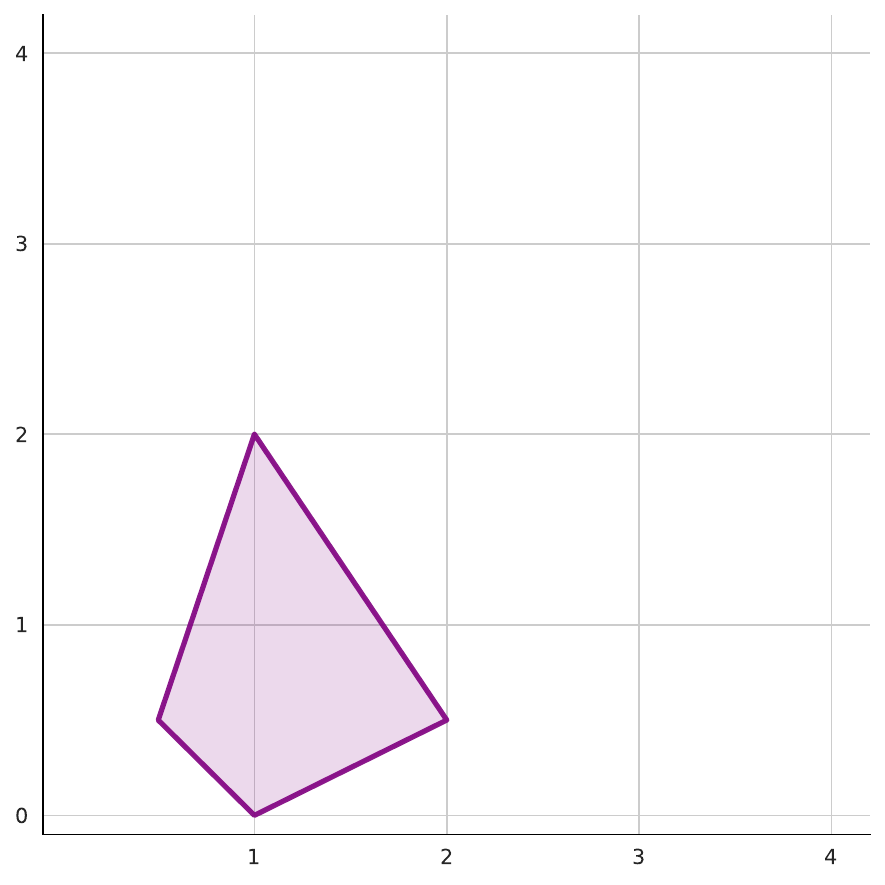}
        \caption{Half Newton polytope.}
        \label{fig:half_newton_polytope}
    \end{subfigure}
    \caption{
        Comparison of Newton polytope and half Newton polytope for the polynomial $p(\mathbf{x}) = 7 x^4 y + x^3 y + x^2 y^4 + x^2 + 3 x y$.
        Support of the polynomial is annotated in \cref{fig:newton_polytope}.
        On the right, we show the half Newton polytope, which is the scaled down version of the Newton polytope.
        If an \gls{sos} decomposition exists, then the half Newton polytope must contain a basis for the polynomial.
    }
    \label{fig:halfnewtonpolytope}
\end{figure}

\newpage

\section{Dataset generation}\label{app:dataset_generation}

We train our models on three data grids totaling $205$ datasets, each containing $1$M polynomials split into $99\%$ training, $0.5\%$ test, and $0.5\%$ validation sets on the following grids, generated with a fixed random seed.

\textbf{Grid 1:} $8$ variables, degrees $\{4, 6, 8, 10, 12, 14, 16, 18, 20\}$, average basis sizes $\{10, 20, 30\}$, matrix structures $\{$sparse, low-rank, block diagonal, full-rank dense$\}$ ($108$ datasets).

\textbf{Grid 2:} Variables $\{4, 6, 8, 10, 12, 14, 16, 18, 20\}$, degree $12$, average basis size $30$, matrix structures $\{$sparse, low-rank, block diagonal, full-rank dense$\}$ ($36$ datasets).

\textbf{Grid 3:} $4$ variables, degree $6$, average basis size $20$, matrix structures $\{$sparse, low-rank, block diagonal, full-rank dense$\}$ ($4$ datasets).

\textbf{Grid 4:} $6$ variables, degree $20$, average basis size $60$, matrix structures $\{$sparse, low-rank, block diagonal, full-rank dense$\}$ ($4$ datasets).

\textbf{Large Scale Special:} In addition, we generated a dataset with $100$ variables, degree $10$, average basis size $20$, matrix structures blockdiagonal. We also constructed a dataset with $6$ variables, degree $40$, average basis size $40$, matrix structures full-rank dense, with the property that all basis monomials have only even degree in all variables. This is to make it more challenging for the model to predict the basis. ($2$ datasets).

For out-of-distribution experiments, we partly reused the datasets above, i.e., using the model trained on only degree $12$ monomials on the degrees $\{4, 6, 8, 10, 12, 14, 16, 18, 20\}$.
In addition, we generated datasets with $1000$ test instances in the following grids.

\textbf{Non-SOS Grid:} $n = 4$, degree $6$, $20$ monomials; $n = 6$, degree $12$, $30$ monomials; $n = 6$, degree $20$, $60$ monomials. We construct non-SOS polynomials by performing eigenvalue decomposition of PSD Gram matrices and then perturbing $10-20\%$ of the eigenvalues to be negative while preserving the overall matrix structure. ($3$ datasets).

\textbf{Permutation Grid:} $8$ variables, block diagonal matrices with degree $8$ and average basis sizes $\{40, 50\}$; dense full-rank matrices with degree $9$ and average basis size $40$; dense full-rank matrices with degree $7$ and average basis size $50$. These were used for ablation studies on the number of permutations. ($4$ datasets).

\textbf{OOD Grid:} $6$ variables, degrees $\{10, 12, 14\}$, average basis sizes $\{20, 25, 35, 40\}$, matrix structures $\{$sparse, low-rank, block diagonal, full-rank dense$\}$ ($48$ new datasets).

This gives a total of $108 + 36 + 4 + 4 + 3 + 48 + 2 = 205$ datasets.

In principle, there is no need to generate these datasets, as one can generate them on the fly. For our experiments, we generated them to avoid any potential issues with the generation process. Our largest dataset (dense full-rank matrices with degree $10$ and average basis size $60$ and $6$ variables) required $2$ hours to generate, while most other datasets from Grid 1 took less then 10 minutes to generate on a 8 core CPU.

\subsection{Polynomial Sampling}\label{app:polynomial_sampling}

For each polynomial in our datasets, we sample monomials uniformly over the space of all monomials up to the given degree $d$.
Our sampling procedure operates in two stages:
(i) \textbf{Degree sampling:} We first sample the degree $k$ of each monomial, where $k \leq d$.
To ensure uniform distribution over all monomials, we weight the degree selection by the number of monomials of each degree, which is $\binom{n+k-1}{k}$ for $n$ variables and degree $k$.
(ii) \textbf{Monomial sampling:} Given the selected degree $k$, we uniformly sample a monomial of degree $k$ from the set of all such monomials.

\section{Additional Experiments}

\subsection{Cost of failed SDP solves and non-SOS polynomials}
A key component of our approach is the ability to recover from failed \gls{sdp} solves.
However, in theory, the cost of a failed \gls{sdp} solve can be quite high, even for a geometric expansion schedule, due to a potentially large number of \gls{sdp} solves. 
We show that in practice, the cost of a failed \gls{sdp} solve is not as high as expected.

We choose $\rho = 1.5$ and $L = 4$ on the out-of-distribution datasets from the ablation study on permutations. Here we only consider instances, where our method has to fallback to the Newton polytope. For these instances, we compute the time-weighted runtime ratio, which is the ratio of the runtime of the combined runtime of the \glspl{sdp} and the final \gls{sdp} based on the Newton polytope. Across all configurations, the time-weighted runtime ratio is very close to $1$, indicating that the cost of a failed \gls{sdp} solve is not as high as expected.
\begin{table}[H]
\centering
\small
\caption{Time-weighted runtime ratio by configuration.}
\label{tab:time_ratio}
\begin{tabular}{@{}l *{6}{S[round-precision=3]}@{}}
\toprule
{Configuration} & {Config 1} & {Config 2} & {Config 3} & {Config 4} & {Config 5} & {Config 6} \\
\midrule
Runtime Ratio & 1.069 & 1.068 & 1.097 & 1.009 & 1.003 & 1.004 \\
\bottomrule
\end{tabular}
\end{table}

We conjecture that this favorable runtime behavior is actually caused by \gls{sdp} problems with small bases being not only infeasible but strongly infeasible, which can potentially be detected much faster by interior point solvers \citep{polik2009stopping}.

While our approach accelerates the computation of positive \gls{sos} certificates, it cannot accelerate the computation of negative \gls{sos} certificates. This is because proving non-existence of an \gls{sos} decomposition requires considering the full Newton polytope. Our approach therefore incurs additional overhead for non-SOS polynomials compared to the standard Newton polytope approach. 
To assess the impact of this overhead, we construct non-SOS polynomials by performing eigenvalue decomposition of PSD Gram matrices and then perturbing the eigenvalues, making approximately $10-20\%$ of them negative while preserving the overall matrix structure, creating polynomials that are subtly non-SOS.

We compute the computational overhead as the ratio of the runtime of the combined runtime of the \glspl{sdp} and the final \gls{sdp} based on the Newton polytope, using $\rho = 1.5$ and $L = 4$.

\begin{table}[H]
    \centering
    \small
    \caption{Overhead for non-SOS polynomials relative to standard Newton polytope approach.}
    \label{tab:non_sos_overhead}
    \begin{tabular}{@{}l S[round-precision=2]@{}}
    \toprule
    {Dataset} & {Runtime Ratio} \\
    \midrule
    $n=4$, degree 6, 20 monomials & 2.24 \\
    $n=6$, degree 12, 30 monomials & 1.76 \\
    $n=6$, degree 20, 60 monomials & 1.05 \\
    \bottomrule
    \end{tabular}
    \end{table}

The overhead decreases for larger problem instances, ranging from $2.24\times$ for small problems to only $1.05\times$ for larger ones, suggesting that the computational penalty for non-SOS cases becomes negligible as problem size increases.

\subsection{Ablation: Number of Permutations}
\label{app:number_of_permutations}

We investigate the impact of the number of permutations $L \in \{1, 2, 4, 6, 8, 10, 12\}$ on our repair mechanism's performance.
We evaluate our method on out-of-distribution test cases where the model encounters polynomials with different characteristics than those seen during training.
This distribution shift causes the initial predictions to be less accurate, making the repair mechanism necessary and allowing us to study its effectiveness.

Specifically, we use a Transformer trained on polynomials with 6 variables, degree 12, average basis size 30, and sparse matrix structure, and test it on six different configurations, which were selected based on the criterion that initial success rates (after greedy repair) were below 50\:

\begin{itemize}
    \item \textbf{Configuration 1:} Block diagonal matrices, degree 8, basis size 40
    \item \textbf{Configuration 2:} Block diagonal matrices, degree 8, basis size 50
    \item \textbf{Configuration 3:} Block diagonal matrices, degree 9, basis size 30
    \item \textbf{Configuration 4:} Dense full-rank matrices, degree 9, basis size 40
    \item \textbf{Configuration 5:} Dense full-rank matrices, degree 7, basis size 50
    \item \textbf{Configuration 6:} Dense full-rank matrices, degree 10, basis size 30
\end{itemize}

All configurations maintain 6 variables while varying degree, matrix structure, and basis size.
These configurations test the model's ability to generalize across different polynomial complexities and underlying matrix structures.

\begin{figure}[H]
    \centering
    \begin{subfigure}[b]{0.49\textwidth}
        \centering
        \includegraphics[width=\textwidth]{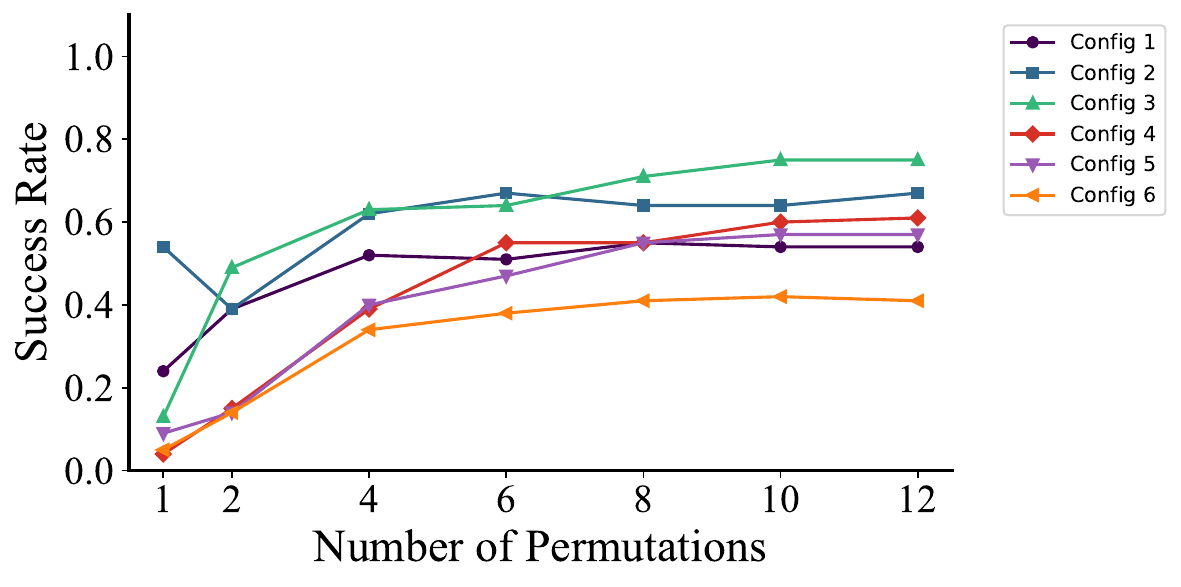}
        \caption{Success rate vs number of permutations.}
        \label{fig:permutation_success}
    \end{subfigure}
    \hfill
    \begin{subfigure}[b]{0.49\textwidth}
        \centering
        \includegraphics[width=\textwidth]{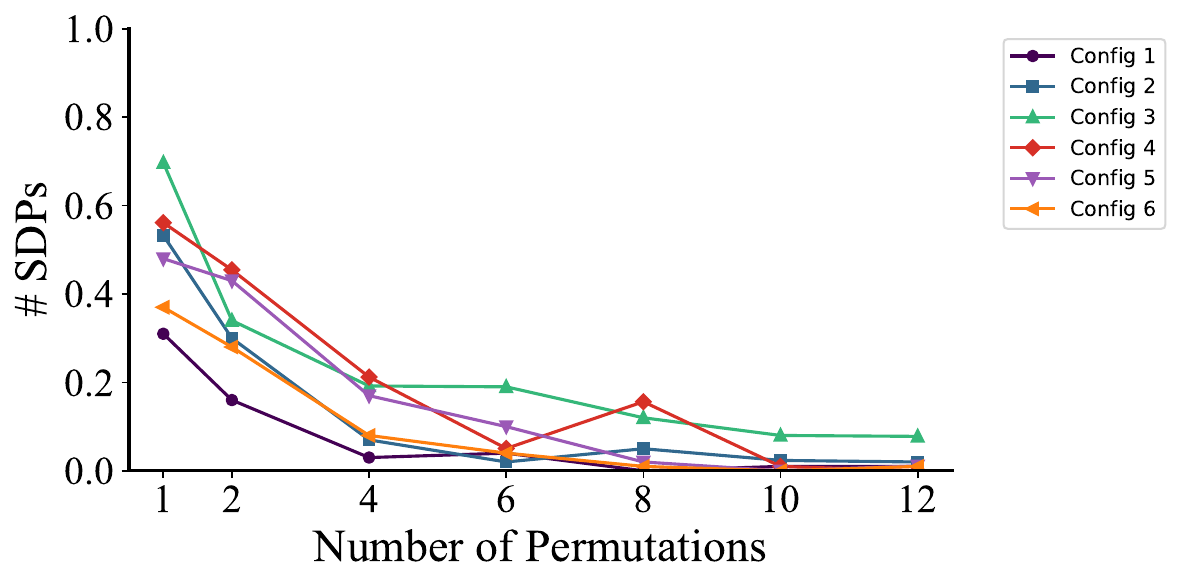}
        \caption{Number of SDP solves vs permutations.}
        \label{fig:permutation_time}
    \end{subfigure}
    \caption{Performance metrics as a function of permutation count.}
    \label{fig:permutation_performance}
\end{figure}

\begin{figure}[H]
    \centering
    \begin{subfigure}[b]{0.49\textwidth}
        \centering
        \includegraphics[width=\textwidth]{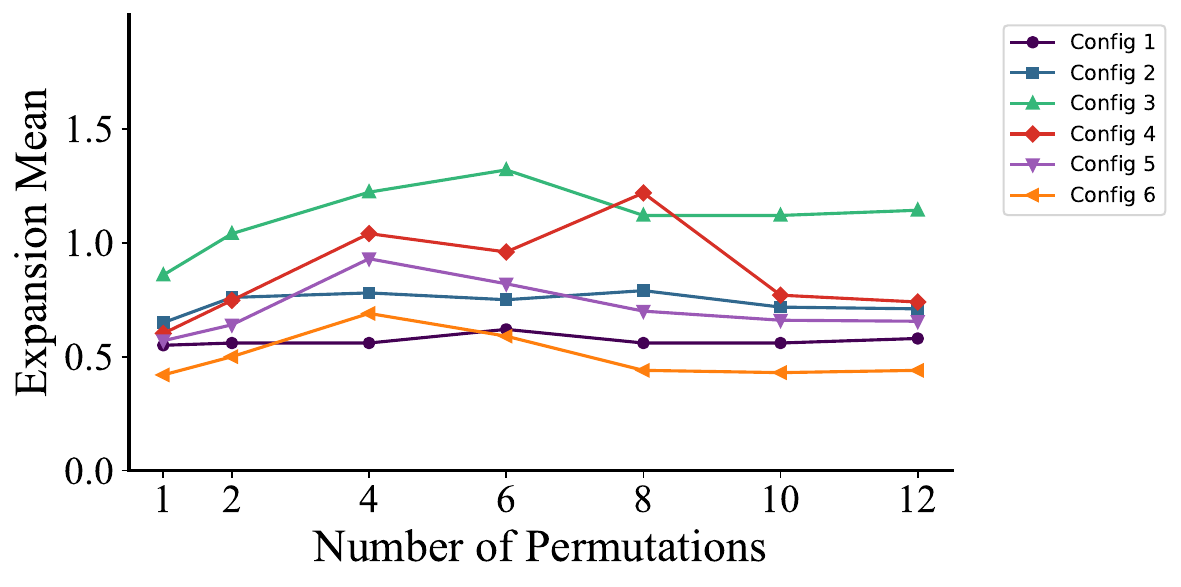}
        \caption{Number of basis expansions vs permutations.}
        \label{fig:permutation_expansion}
    \end{subfigure}
    \hfill
    \begin{subfigure}[b]{0.49\textwidth}
        \centering
        \includegraphics[width=\textwidth]{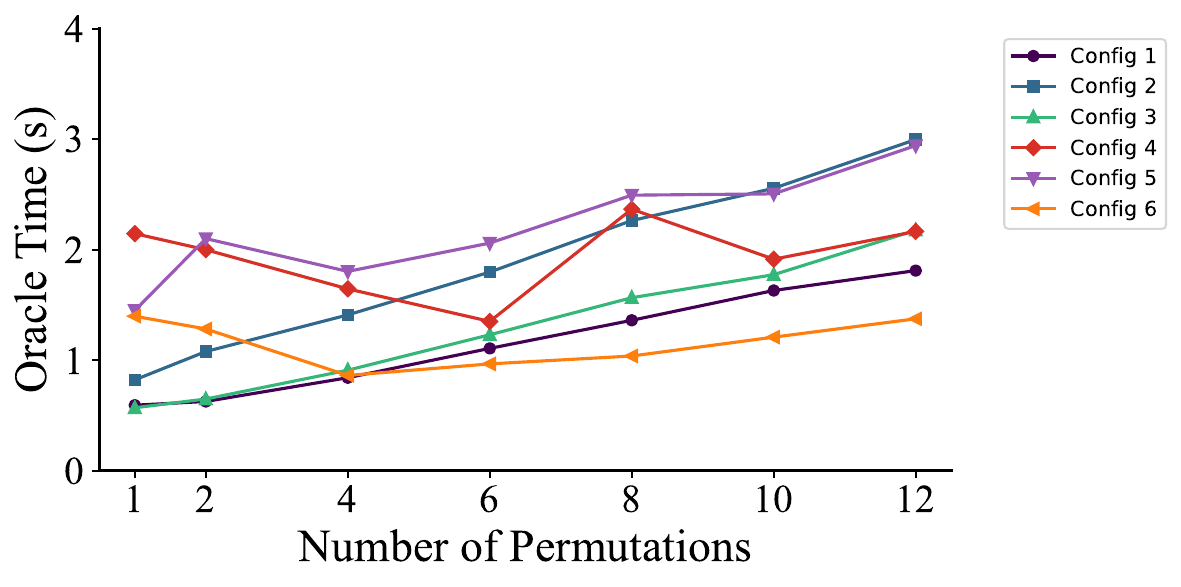}
        \caption{Oracle time vs permutations.}
        \label{fig:permutation_oracle}
    \end{subfigure}
    \caption{Computational overhead analysis for varying permutation counts.}
    \label{fig:permutation_overhead}
\end{figure}

\begin{figure}[H]
    \centering
    \begin{subfigure}[b]{0.98\textwidth}
        \centering
        \includegraphics[width=\textwidth]{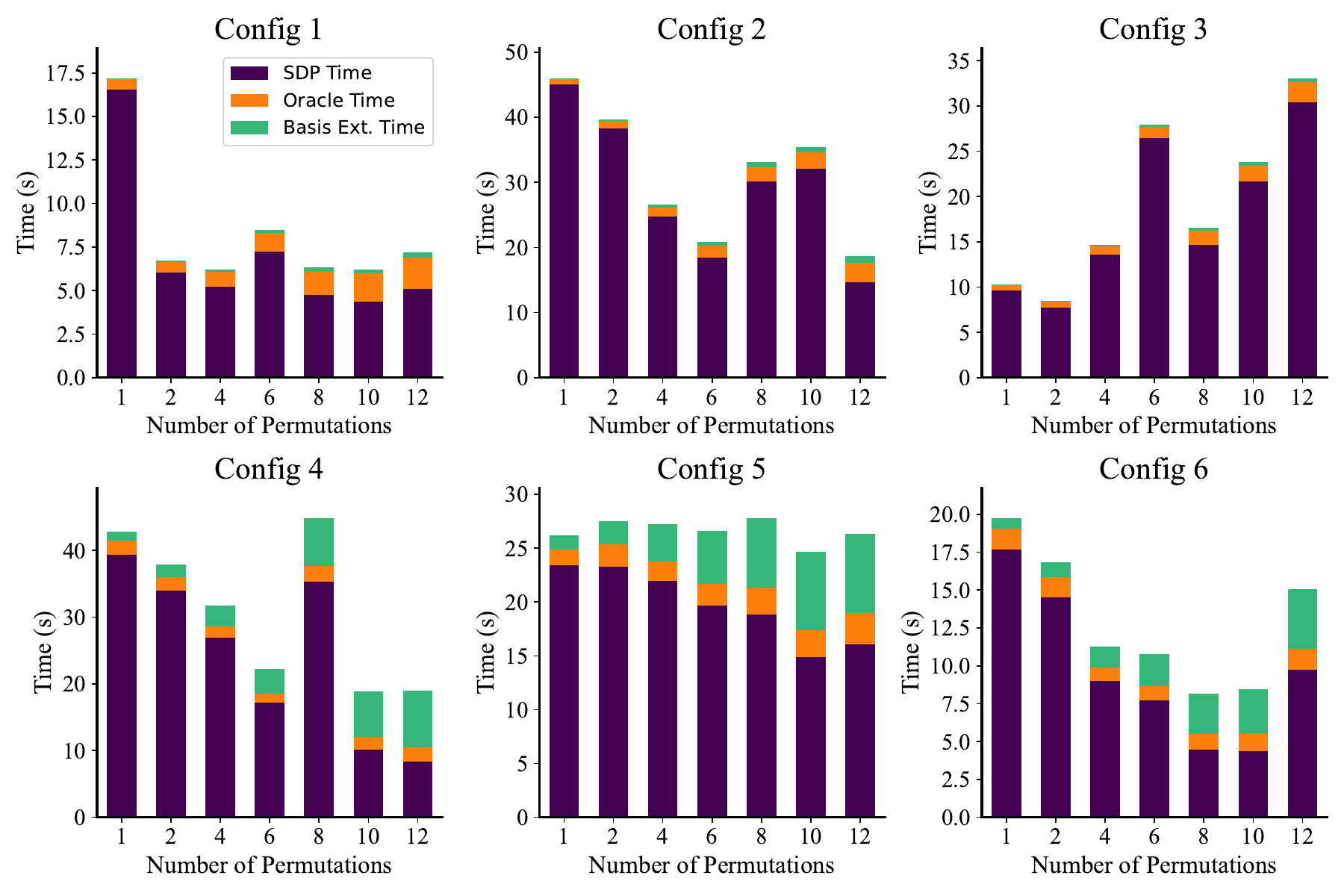}
        \caption{Runtime allocation for configurations and permutations.}
        \label{fig:permutation_time_bars}
    \end{subfigure}
    \caption{Basis expansion behavior with varying permutation counts.}
    \label{fig:permutation_expansion_analysis}
\end{figure}

\textbf{Key Results:}
The ablation study reveals several important trends: (1) Success rates improve substantially with increasing permutations, with diminishing returns beyond $L=8$; (2) Transformer inference time scales approximately linearly with the number of permutations, as expected; (3) The number of SDP solves decreases with more permutations, likely due to better scoring; (4) Fallback to Newton polytope methods decreases as more permutations increase the probability of finding a valid basis; (5) The average number of basis expansions remains relatively constant, indicating that additional permutations primarily improve success rate rather than requiring more expansions per successful case.

\cref{fig:permutation_expansion_analysis} shows that the runtime is typically dominated by \gls{sdp} solves.
The optimal number of permutations is usually around $L=4-8$, where the success rate plateaus and further increases in permutations mainly increase the inference time.

\subsection{Ablation: Expansion Schedule}
\label{app:expansion_schedule}

We ablate the expansion factor $\rho \in \{1.1, 1.2, 1.4, 1.6, 1.8, 2.0\}$ while keeping all other hyperparameters fixed.
Compared to the permutation count, $\rho$ has a smaller effect on overall performance.

\begin{figure}[H]
    \centering
    \begin{subfigure}[b]{0.49\textwidth}
        \centering
        \includegraphics[width=\textwidth]{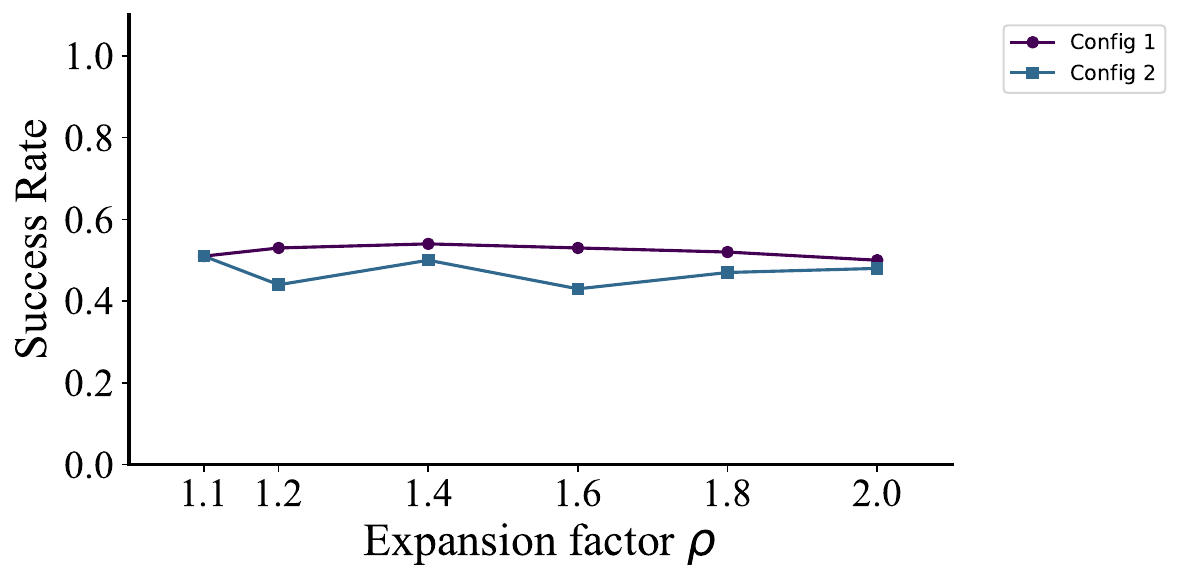}
        \caption{Success rate vs $\rho$.}
        \label{fig:rho_success}
    \end{subfigure}
    \hfill
    \begin{subfigure}[b]{0.49\textwidth}
        \centering
        \includegraphics[width=\textwidth]{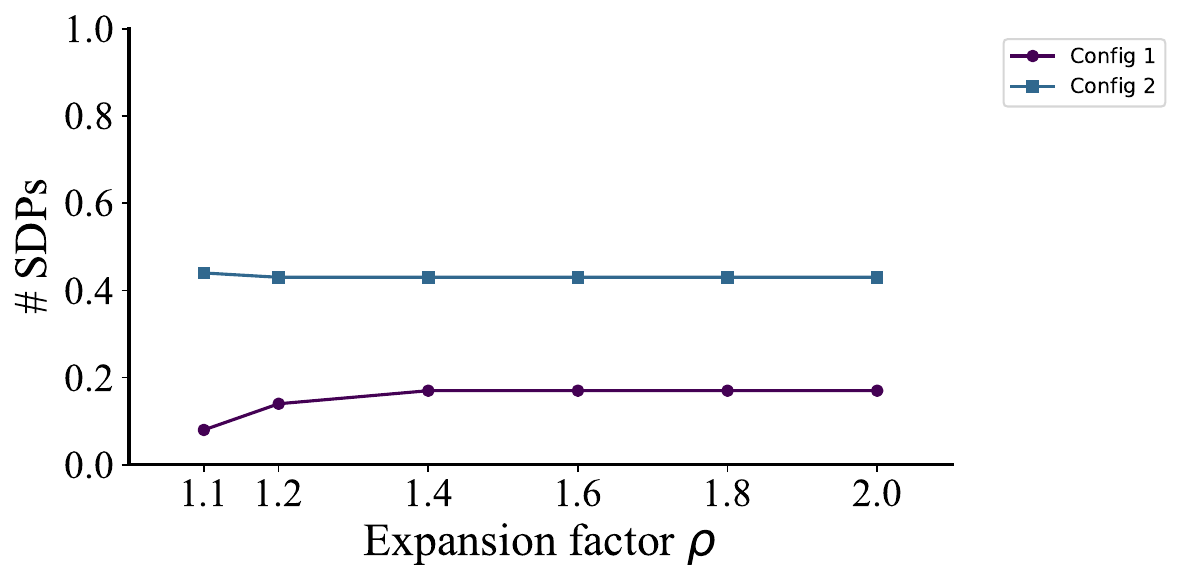}
        \caption{Number of SDP solves vs $\rho$.}
        \label{fig:rho_sdps}
    \end{subfigure}
    \caption{Performance metrics as a function of the expansion factor.}
    \label{fig:rho_performance}
\end{figure}

\begin{figure}[H]
    \centering
    \begin{subfigure}[b]{0.49\textwidth}
        \centering
        \includegraphics[width=\textwidth]{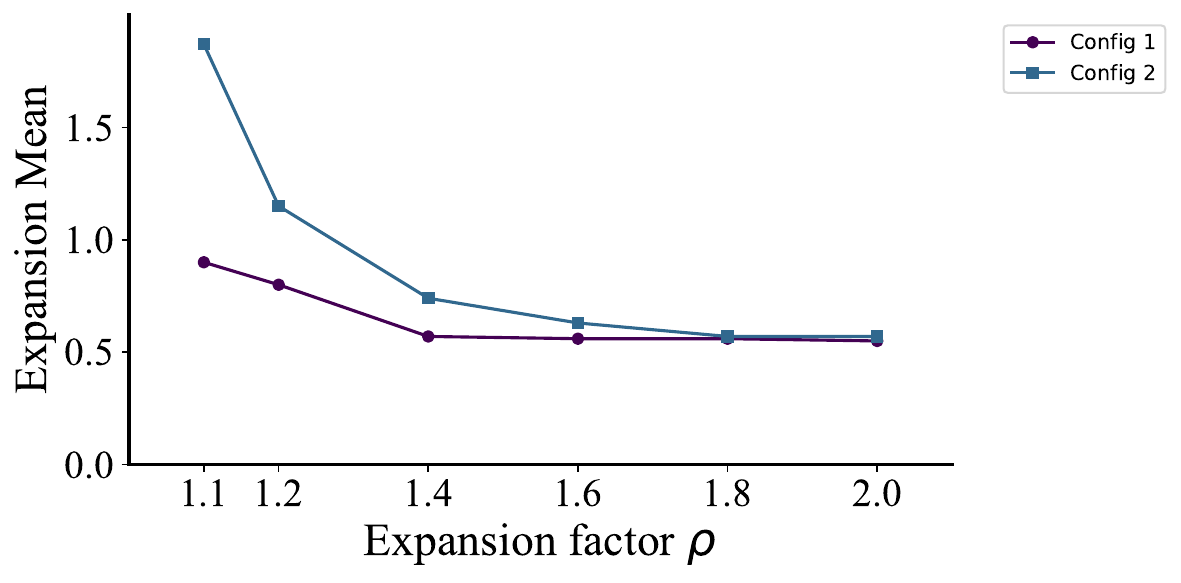}
        \caption{Number of basis expansions vs $\rho$.}
        \label{fig:rho_expansion}
    \end{subfigure}
    \hfill
    \begin{subfigure}[b]{0.49\textwidth}
        \centering
        \includegraphics[width=\textwidth]{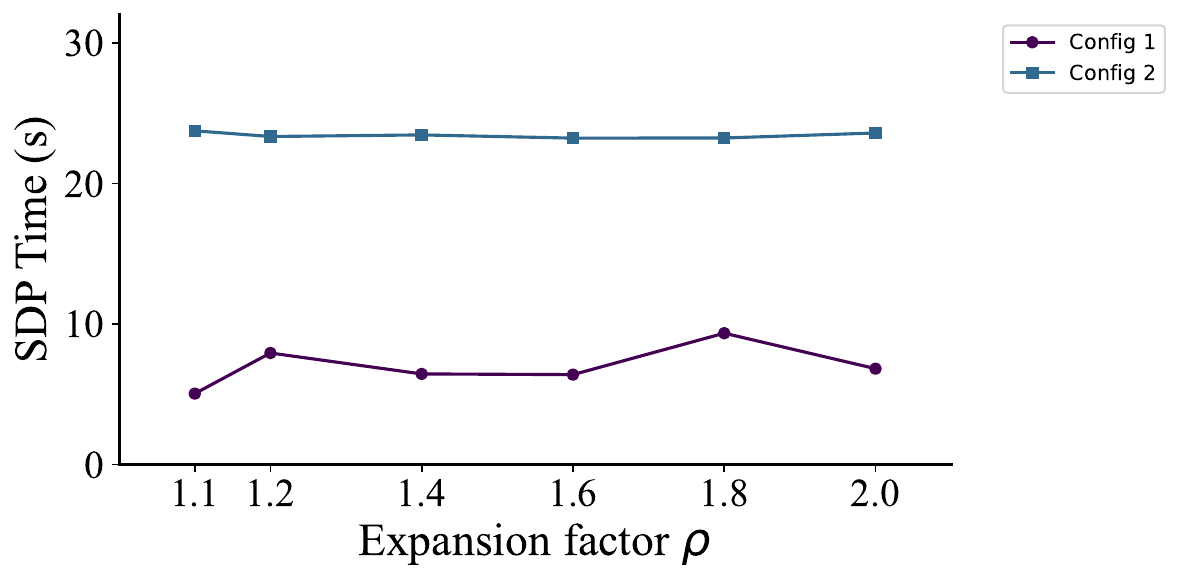}
        \caption{Average \gls{sdp} time vs $\rho$.}
        \label{fig:rho_sdp_time}
    \end{subfigure}
    \caption{Expansion behavior and solver time across expansion factors.}
    \label{fig:rho_overhead}
\end{figure}

\textbf{Key Results:}
The expansion factor is less critical than the number of permutations.
Larger $\rho$ reduces the number of basis expansions, as expected, while \gls{sdp} time remains fairly consistent across settings, likely due to the large gap between the Newton polytope size and the initially proposed basis.
Overall, slightly smaller expansion factors (e.g., $\rho \leq 1.4$) are marginally more favorable, though differences are modest.

\subsection{Basis Extension Algorithms}
\label{app:basis_extension}

The basis extension mechanism described in Section 3.3 employs two key algorithms.
\Cref{alg:basis_extension} presents the main extension algorithm, which iteratively adds monomials to the basis until it can theoretically represent the \gls{sos} decomposition of the polynomial.
At each iteration, it computes which monomials from the support $S$ are missing from the current basis products, then selects the most promising candidate to add.

\begin{algorithm}
    \caption{\textsc{CoverageRepair}}
    \label{alg:basis_extension}
    \begin{algorithmic}[1]
        \REQUIRE Initial basis $B_0$, polynomial $p$, maximum iterations $max\_iter$
        \ENSURE Extended basis $B$
        \STATE $B \gets B_0$
        \STATE $S \gets \text{support}(p)$
        \FOR{$i = 1$ to $max\_iter$}
            \STATE $P \gets \{m_1 \cdot m_2 \mid m_1, m_2 \in B\}$
            \STATE $M \gets S \setminus P$
            \IF{$M = \emptyset$}
                \RETURN $B$
            \ELSE
                \STATE $d^* \gets \textsc{FIND-CANDIDATE}(M, B)$
            \ENDIF
            \IF{$d^* = \text{None}$}
                \RETURN $B$
            \ELSE
                \STATE $B \gets B \cup \{d^*\}$
            \ENDIF
        \ENDFOR
        \RETURN $B$
    \end{algorithmic}
\end{algorithm}

The algorithm calls the subroutine \textsc{FIND-CANDIDATE} to select the most promising monomial to add to the basis (\Cref{alg:find_candidate}).
Given a set of missing monomials $M$ and current basis $B$, the subroutine counts how many times each candidate monomial $d$ appears as a divisor of monomials in $M$, then returns the most frequent candidate.

\begin{algorithm}
    \caption{\textsc{FIND-CANDIDATE}$(M, B)$}
    \label{alg:find_candidate}
    \begin{algorithmic}[1]
        \REQUIRE Set of missing monomials $M$, current basis $B$
        \ENSURE Monomial $d^*$ to add to the basis
        \STATE $D \gets$ empty counter
        \FOR{each $m \in M$}
            \FOR{each $b \in B$}
                \IF{$m$ is divisible by $b$}
                    \STATE $d \gets m / b$
                    \STATE Increment $D[d]$ by $1$
                \ENDIF
            \ENDFOR
        \ENDFOR
        \IF{$D$ is empty}
            \RETURN \textbf{None}
        \ENDIF
        \STATE $d^* \gets \arg\max_{d \in D} D[d]$
        \RETURN $d^*$
    \end{algorithmic}
\end{algorithm}

\subsection{Basis Extension Analysis}\label{app:basis_extension_analysis}

We analyze the performance of the basis extension mechanism in the context of the additive combinatorics inverse problem.
We test our ability to recover the complete set $A$ from a partial observation $A'$ and the complete sumset $A + A$.

For this experiment, we generate a polynomial with a known basis $A$ and then remove $k$ elements from the basis to create a partial observation $A'$.
We then use our extension mechanism to attempt to recover the missing elements and reconstruct $A$.
Finally, we measure whether we were able to recover the original set $A$ exactly (equal), the original set $A$ is a subset of the recovered set (superset), or the recovered set does not contain the original set $A$ (insufficient).

\begin{table}[h]
\begin{center}
\renewcommand{\arraystretch}{1.2}
\setlength{\tabcolsep}{12pt}
\begin{tabular}{c c c c}
    \toprule
    k & Equal & Superset & Insufficient \\
    \midrule
    1 & 999 & 0 & 1 \\
    2 & 992 & 2 & 6 \\
    3 & 982 & 2 & 16 \\
    4 & 942 & 24 & 34 \\
    5 & 801 & 67 & 132 \\
    6 & 695 & 13 & 292 \\
    \bottomrule
\end{tabular}
\end{center}
\caption{Basis extension analysis in the additive combinatorics setting.
The table shows how many times the original set $A$ was recovered exactly (equal), the original set $A$ is a subset of the recovered set (superset), or the recovered set does not contain the original set $A$ (insufficient) for different values of $k$ missing elements from the partial observation $A'$.}
\label{tab:basis_extension}
\end{table}

The results show that the basis extension mechanism is able to recover the original set $A$ for most cases if $k \leq 4$.
This demonstrates the effectiveness of our greedy approach to the additive combinatorics inverse problem: given $A' \subseteq A$ and $A + A$, we can reliably recover $A$ when the partial observation $A'$ contains most of the elements of $A$.
Removing more elements results in a higher number of insufficient cases, highlighting the fundamental difficulty of the inverse problem when the partial observation becomes too sparse.

\subsubsection{False Positives Analysis}
We also analyze the impact of false positives on the extension mechanism.
In the additive combinatorics setting, this corresponds to the scenario where the sumset $A + A$ contains some erroneous elements that are not actually part of the true sumset.

We generate a polynomial with a known basis $A$ and then remove $k$ elements from the basis to create $A'$.
Then we add the same number of random elements to the sumset $A + A$ (simulating false positives) and measure whether we can recover exactly the original set $A$ plus the elements that would generate the false positive sumset elements (exact), the recovered set contains additional unnecessary elements (superset), or the recovered set is insufficient to cover the original set $A$ (insufficient).

\begin{table}[h]
\begin{center}
\renewcommand{\arraystretch}{1.2}
\setlength{\tabcolsep}{12pt}
\begin{tabular}{c c c c}
    \toprule
    $k$ & Exact & Superset & Insufficient \\
    \midrule
    1 & 991 & 2 & 7 \\
    2 & 980 & 20 & 0 \\
    3 & 957 & 43 & 0 \\
    4 & 902 & 98 & 0 \\
    5 & 745 & 205 & 50 \\
    6 & 640 & 154 & 206 \\
    \bottomrule
\end{tabular}
\end{center}
\caption{Analysis of basis extension with false positives in the additive combinatorics setting.
The table shows how many times we recovered exactly the original set $A$ plus the elements needed to generate the false positive sumset elements (exact), recovered a superset containing additional unnecessary elements (superset), or failed to recover the complete set $A$ (insufficient) for different values of $k$ missing elements from the partial observation $A'$.}
\label{tab:basis_extension_false_positives}
\end{table}

The results show that the extension mechanism is able to recover the original set $A$ for the majority of cases, adding no additional elements beyond those necessary to explain the false positive elements in the sumset.
This demonstrates the robustness of our greedy approach to the additive combinatorics inverse problem even in the presence of noise in the sumset data.

\subsection{Empirical Basis Size Analysis}
\label{app:empirical_basis_size_analysis}

To demonstrate that the theoretical lower bounds are often tight in practice, we conducted extensive empirical analysis across multiple polynomial structures and problem sizes.
This analysis validates that our training data generation produces near-minimal bases, supporting the key message that these bounds are often tight and therefore we trained the model on near minimal basis.

The analysis builds on two key theoretical results established in \cref{lem:combinatorial_bound} and \cref{lem:newton_vertices}.
The combinatorial lower bound (LB$_c$) establishes that for any \gls{sos} basis $B$ of polynomial $p(\mathbf{x})$ with support $S(p)$, we have $|B| \geq \frac{\sqrt{1+8|S(p)|}-1}{2}$.
This bound arises from the fact that a basis of $m$ monomials can generate at most $\frac{m(m+1)}{2}$ distinct monomials through pairwise products.
The Newton polytope vertices lower bound (LB$_v$) provides a geometric constraint, requiring that certain vertices of the half Newton polytope must appear in any valid basis.

We conducted systematic experiments across four polynomial matrix structures: block-diagonal matrices with sparse block patterns, low-rank matrices with reduced rank structure, dense full-rank matrices with general structure, and sparse matrices with many zero entries.
For each structure, we tested multiple configurations with problem sizes $n \in \{4,6\}$ variables, degrees $d \in \{3,6,10\}$, and target basis sizes $\hat{B} \in \{20,30,60\}$.

Each table reports metrics for measuring lower bound tightness.
$\hat{B}$ denotes the actual basis size used to generate the polynomial.
LB$_v$ and LB$_c$ represent the Newton polytope vertices and combinatorial lower bounds respectively.
$|\frac{1}{2}N(p)|$ gives the size of the half Newton polytope used in traditional approaches.
$\Delta^{(v)} = \hat{B} - \text{LB}_v$ and $\rho^{(v)} = \hat{B} / \text{LB}_v$ measure the gap and ratio between actual basis size and vertex bound, while $\Delta^{(c)}$ and $\rho^{(c)}$ provide analogous metrics for the combinatorial bound.

The results demonstrate that the Newton polytope vertices bound is remarkably tight across all polynomial structures.
For low-rank and dense matrices, we observe $\Delta^{(v)} = 0$ (perfect tightness) in most cases, with $\rho^{(v)} = 1.000$.
For block-diagonal and sparse matrices, gaps remain small with $\Delta^{(v)} \leq 4$ and $\rho^{(v)} \leq 1.065$.
The combinatorial bound shows varying tightness depending on structure, achieving $\rho^{(c)} \approx 1.000-1.030$ for structured matrices but $\rho^{(c)} \approx 1.5-2.5$ for sparse structures.
Importantly, the half Newton polytope $|\frac{1}{2}N(p)|$ is significantly larger than actual basis sizes, containing $3$-$15\times$ more monomials than necessary and demonstrating substantial redundancy in traditional approaches.

\begin{table}[H]
    \centering
    \small
    \caption{Lower bounds for blockdiagonal matrices with $n=6$ variables and degree $d=10$.}
    \label{tab:lb-tightness-blockdiag-n6-d10-s60}
    \begin{tabular}{@{}l S S S S S S S S[round-precision=3] S S[round-precision=3]@{}}
    \toprule
    {$\hat{B}$} & {LB$_v$} & {LB$_c$} &  {$|\tfrac12 N(p)|$} & {$\Delta^{(v)}$} & {$\rho^{(v)}$} & {$\Delta^{(c)}$} & {$\rho^{(c)}$} \\
    \midrule
    60 & 60 & 32 & 299 & 0 & 1.000 & 28 & 1.875 \\
    66 & 62 & 33 & 339 & 4 & 1.065 & 33 & 2.000 \\
    56 & 55 & 31 & 189 & 1 & 1.018 & 25 & 1.806 \\
    55 & 54 & 31 & 240 & 1 & 1.019 & 24 & 1.774 \\
    60 & 60 & 32 & 248 & 0 & 1.000 & 28 & 1.875 \\
    66 & 64 & 33 & 281 & 2 & 1.031 & 33 & 2.000 \\
    55 & 55 & 31 & 211 & 0 & 1.000 & 24 & 1.774 \\
    56 & 53 & 31 & 295 & 3 & 1.057 & 25 & 1.806 \\
    \bottomrule
    \end{tabular}
    \end{table}

    \begin{table}[H]
    \centering
    \small
        \caption{Lower bounds for low-rank matrices with $n=6$ variables and degree $d=10$.}
        \label{tab:lb-tightness-lowrank-n6-d10-s60}
        \begin{tabular}{@{}l S S S S S S S S[round-precision=3] S S[round-precision=3]@{}}
    \toprule
    {$\hat{B}$} & {LB$_v$} & {LB$_c$} &  {$|\tfrac12 N(p)|$} & {$\Delta^{(v)}$} & {$\rho^{(v)}$} & {$\Delta^{(c)}$} & {$\rho^{(c)}$} \\
    \midrule
    73 & 73 & 71 & 252 & 0 & 1.000 & 2 & 1.028 \\
    70 & 70 & 69 & 325 & 0 & 1.000 & 1 & 1.014 \\
    63 & 63 & 62 & 250 & 0 & 1.000 & 1 & 1.016 \\
    64 & 64 & 63 & 215 & 0 & 1.000 & 1 & 1.016 \\
    74 & 74 & 73 & 378 & 0 & 1.000 & 1 & 1.014 \\
    65 & 65 & 64 & 292 & 0 & 1.000 & 1 & 1.016 \\
    69 & 69 & 68 & 331 & 0 & 1.000 & 1 & 1.015 \\
    72 & 72 & 71 & 316 & 0 & 1.000 & 1 & 1.014 \\
    59 & 59 & 59 & 254 & 0 & 1.000 & 0 & 1.000 \\
    \bottomrule
    \end{tabular}
    \end{table}

    \begin{table}[H]
    \centering
    \small
    \caption{Lower bounds for dense full-rank matrices with $n=6$ variables and degree $d=10$.}
    \label{tab:lb-tightness-random-n6-d10-s60}
    \begin{tabular}{@{}l S S S S S S S S[round-precision=3] S S[round-precision=3]@{}}
    \toprule
    {$\hat{B}$} & {LB$_v$} & {LB$_c$} &  {$|\tfrac12 N(p)|$} & {$\Delta^{(v)}$} & {$\rho^{(v)}$} & {$\Delta^{(c)}$} & {$\rho^{(c)}$} \\
    \midrule
    48 & 48 & 48 & 243 & 0 & 1.000 & 0 & 1.000 \\
    58 & 58 & 57 & 227 & 0 & 1.000 & 1 & 1.018 \\
    74 & 74 & 73 & 378 & 0 & 1.000 & 1 & 1.014 \\
    73 & 73 & 71 & 252 & 0 & 1.000 & 2 & 1.028 \\
    70 & 70 & 69 & 325 & 0 & 1.000 & 1 & 1.014 \\
    68 & 68 & 67 & 331 & 0 & 1.000 & 1 & 1.015 \\
    59 & 59 & 59 & 254 & 0 & 1.000 & 0 & 1.000 \\
    63 & 63 & 63 & 250 & 0 & 1.000 & 0 & 1.000 \\
    47 & 47 & 47 & 175 & 0 & 1.000 & 0 & 1.000 \\
    \bottomrule
    \end{tabular}
    \end{table}

    \begin{table}[H]
        \centering
        \small
        \caption{Lower bounds for sparse matrices with $n=6$ variables and degree $d=10$.}
        \label{tab:lb-tightness-sparse-n6-d10-s60}
        \begin{tabular}{@{}l S S S S S S S S[round-precision=3] S S[round-precision=3]@{}}
        \toprule
        {$\hat{B}$} & {LB$_v$} & {LB$_c$} &  {$|\tfrac12 N(p)|$} & {$\Delta^{(v)}$} & {$\rho^{(v)}$} & {$\Delta^{(c)}$} & {$\rho^{(c)}$} \\
        \midrule
        56 & 56 & 32 & 189 & 0 & 1.000 & 24 & 1.750 \\
        58 & 56 & 30 & 235 & 2 & 1.036 & 28 & 1.933 \\
        61 & 61 & 42 & 288 & 0 & 1.000 & 19 & 1.452 \\
        55 & 54 & 33 & 211 & 1 & 1.019 & 22 & 1.667 \\
        66 & 64 & 36 & 281 & 2 & 1.031 & 30 & 1.833 \\
        66 & 65 & 36 & 339 & 1 & 1.015 & 30 & 1.833 \\
        56 & 55 & 32 & 295 & 1 & 1.018 & 24 & 1.750 \\
        67 & 66 & 35 & 280 & 1 & 1.015 & 32 & 1.914 \\
        62 & 62 & 42 & 264 & 0 & 1.000 & 20 & 1.476 \\
        57 & 57 & 34 & 212 & 0 & 1.000 & 23 & 1.676 \\
        \bottomrule
        \end{tabular}
        \end{table}

        \begin{table}[H]
            \centering
            \small
            \caption{Lower bound tightness for block diagonal matrices, $n=4$, $d=3$, average sampled basis size $\hat{B}=20$.}
            \label{tab:lb-tightness-blockdiag-n4-d3-s20}
            \begin{tabular}{@{}l S S S S S S S S[round-precision=3] S S[round-precision=3]@{}}
            \toprule
            {$\hat{B}$} & {LB$_v$} & {LB$_c$} &  {$|\tfrac12 N(p)|$} & {$\Delta^{(v)}$} & {$\rho^{(v)}$} & {$\Delta^{(c)}$} & {$\rho^{(c)}$} \\
            \midrule
            28 & 14 & 11 & 29 & 14 & 2.000 & 17 & 2.545 \\
            19 & 18 & 10 & 24 & 1 & 1.056 & 9 & 1.900 \\
            22 & 16 & 10 & 24 & 6 & 1.375 & 12 & 2.200 \\
            11 & 11 & 7 & 15 & 0 & 1.000 & 4 & 1.571 \\
            13 & 12 & 8 & 17 & 1 & 1.083 & 5 & 1.625 \\
            25 & 16 & 11 & 28 & 9 & 1.562 & 14 & 2.273 \\
            14 & 13 & 8 & 16 & 1 & 1.077 & 6 & 1.750 \\
            15 & 14 & 8 & 24 & 1 & 1.071 & 7 & 1.875 \\
            21 & 15 & 10 & 25 & 6 & 1.400 & 11 & 2.100 \\
            25 & 13 & 11 & 28 & 12 & 1.923 & 14 & 2.273 \\
            \bottomrule
            \end{tabular}
            \end{table}

            \begin{table}[H]
            \centering
            \small
            \caption{Lower bound tightness for low rank matrices, $n=4$, $d=3$, average sampled basis size $\hat{B}=20$.}
            \label{tab:lb-tightness-lowrank-n4-d3-s20}
            \begin{tabular}{@{}l S S S S S S S S[round-precision=3] S S[round-precision=3]@{}}
            \toprule
            {$\hat{B}$} & {LB$_v$} & {LB$_c$} &  {$|\tfrac12 N(p)|$} & {$\Delta^{(v)}$} & {$\rho^{(v)}$} & {$\Delta^{(c)}$} & {$\rho^{(c)}$} \\
            \midrule
            11 & 11 & 11 & 15 & 0 & 1.000 & 0 & 1.000 \\
            19 & 18 & 16 & 24 & 1 & 1.056 & 3 & 1.188 \\
            22 & 22 & 17 & 24 & 0 & 1.000 & 5 & 1.294 \\
            13 & 13 & 13 & 17 & 0 & 1.000 & 0 & 1.000 \\
            28 & 25 & 19 & 29 & 3 & 1.120 & 9 & 1.474 \\
            25 & 24 & 18 & 28 & 1 & 1.042 & 7 & 1.389 \\
            15 & 15 & 14 & 24 & 0 & 1.000 & 1 & 1.071 \\
            14 & 14 & 13 & 16 & 0 & 1.000 & 1 & 1.077 \\
            21 & 20 & 17 & 25 & 1 & 1.050 & 4 & 1.235 \\
            13 & 13 & 12 & 15 & 0 & 1.000 & 1 & 1.083 \\
            \bottomrule
            \end{tabular}
            \end{table}

            \begin{table}[H]
            \centering
            \small
            \caption{Lower bound tightness for dense matrices, $n=4$, $d=3$, average sampled basis size $\hat{B}=20$.}
            \label{tab:lb-tightness-random-n4-d3-s20}
            \begin{tabular}{@{}l S S S S S S S S[round-precision=3] S S[round-precision=3]@{}}
            \toprule
            {$\hat{B}$} & {LB$_v$} & {LB$_c$} &  {$|\tfrac12 N(p)|$} & {$\Delta^{(v)}$} & {$\rho^{(v)}$} & {$\Delta^{(c)}$} & {$\rho^{(c)}$} \\
            \midrule
            11 & 11 & 11 & 15 & 0 & 1.000 & 0 & 1.000 \\
            19 & 18 & 16 & 24 & 1 & 1.056 & 3 & 1.188 \\
            13 & 13 & 13 & 17 & 0 & 1.000 & 0 & 1.000 \\
            22 & 22 & 17 & 24 & 0 & 1.000 & 5 & 1.294 \\
            28 & 25 & 19 & 29 & 3 & 1.120 & 9 & 1.474 \\
            25 & 24 & 18 & 28 & 1 & 1.042 & 7 & 1.389 \\
            21 & 20 & 17 & 25 & 1 & 1.050 & 4 & 1.235 \\
            14 & 14 & 13 & 16 & 0 & 1.000 & 1 & 1.077 \\
            15 & 15 & 14 & 24 & 0 & 1.000 & 1 & 1.071 \\
            25 & 23 & 18 & 28 & 2 & 1.087 & 7 & 1.389 \\
            \bottomrule
            \end{tabular}
            \end{table}

            \begin{table}[H]
            \centering
            \small
            \caption{Lower bound tightness for sparse matrices, $n=4$, $d=3$, average sampled basis size $\hat{B}=20$.}
            \label{tab:lb-tightness-sparse-n4-d3-s20}
            \begin{tabular}{@{}l S S S S S S S S[round-precision=3] S S[round-precision=3]@{}}
            \toprule
            {$\hat{B}$} & {LB$_v$} & {LB$_c$} &  {$|\tfrac12 N(p)|$} & {$\Delta^{(v)}$} & {$\rho^{(v)}$} & {$\Delta^{(c)}$} & {$\rho^{(c)}$} \\
            \midrule
            22 & 13 & 12 & 30 & 9 & 1.692 & 10 & 1.833 \\
            6 & 6 & 4 & 9 & 0 & 1.000 & 2 & 1.500 \\
            13 & 13 & 7 & 14 & 0 & 1.000 & 6 & 1.857 \\
            8 & 8 & 5 & 9 & 0 & 1.000 & 3 & 1.600 \\
            17 & 16 & 9 & 19 & 1 & 1.062 & 8 & 1.889 \\
            20 & 16 & 11 & 22 & 4 & 1.250 & 9 & 1.818 \\
            6 & 6 & 4 & 6 & 0 & 1.000 & 2 & 1.500 \\
            11 & 11 & 6 & 17 & 0 & 1.000 & 5 & 1.833 \\
            19 & 14 & 8 & 24 & 5 & 1.357 & 11 & 2.375 \\
            26 & 15 & 11 & 28 & 11 & 1.733 & 15 & 2.364 \\
            \bottomrule
            \end{tabular}
            \end{table}

            \begin{table}[H]
                \centering
                \small
                \caption{Lower-bound tightness for block diagonal matrices, $n=6$, $d=6$, average sampled basis size $\hat{B}=30$.}
                \label{tab:lb-tightness-blockdiag-n6-d6-s30}
                \begin{tabular}{@{}l S S S S S S S S[round-precision=3] S S[round-precision=3]@{}}
                \toprule
                {$\hat{B}$} & {LB$_v$} & {LB$_c$} &  {$|\tfrac12 N(p)|$} & {$\Delta^{(v)}$} & {$\rho^{(v)}$} & {$\Delta^{(c)}$} & {$\rho^{(c)}$} \\
                \midrule
                40 & 37 & 16 & 89 & 3 & 1.081 & 24 & 2.500 \\
                40 & 38 & 16 & 64 & 2 & 1.053 & 24 & 2.500 \\
                33 & 32 & 14 & 73 & 1 & 1.031 & 19 & 2.357 \\
                39 & 38 & 15 & 59 & 1 & 1.026 & 24 & 2.600 \\
                33 & 33 & 14 & 46 & 0 & 1.000 & 19 & 2.357 \\
                19 & 19 & 10 & 24 & 0 & 1.000 & 9 & 1.900 \\
                21 & 21 & 10 & 28 & 0 & 1.000 & 11 & 2.100 \\
                29 & 29 & 12 & 46 & 0 & 1.000 & 17 & 2.417 \\
                38 & 37 & 15 & 67 & 1 & 1.027 & 23 & 2.533 \\
                22 & 22 & 10 & 27 & 0 & 1.000 & 12 & 2.200 \\
                \bottomrule
                \end{tabular}
                \end{table}

                \begin{table}[H]
                \centering
                \small
                \caption{Lower-bound tightness for low rank matrices, $n=6$, $d=6$, average sampled basis size $\hat{B}=30$.}
                \label{tab:lb-tightness-lowrank-n6-d6-s30}
                \begin{tabular}{@{}l S S S S S S S S[round-precision=3] S S[round-precision=3]@{}}
                \toprule
                {$\hat{B}$} & {LB$_v$} & {LB$_c$} &  {$|\tfrac12 N(p)|$} & {$\Delta^{(v)}$} & {$\rho^{(v)}$} & {$\Delta^{(c)}$} & {$\rho^{(c)}$} \\
                \midrule
                19 & 19 & 19 & 24 & 0 & 1.000 & 0 & 1.000 \\
                33 & 33 & 33 & 73 & 0 & 1.000 & 0 & 1.000 \\
                40 & 40 & 39 & 89 & 0 & 1.000 & 1 & 1.026 \\
                21 & 21 & 21 & 28 & 0 & 1.000 & 0 & 1.000 \\
                40 & 40 & 39 & 64 & 0 & 1.000 & 1 & 1.026 \\
                23 & 23 & 23 & 33 & 0 & 1.000 & 0 & 1.000 \\
                33 & 33 & 33 & 46 & 0 & 1.000 & 0 & 1.000 \\
                39 & 39 & 38 & 59 & 0 & 1.000 & 1 & 1.026 \\
                22 & 22 & 22 & 27 & 0 & 1.000 & 0 & 1.000 \\
                29 & 29 & 29 & 46 & 0 & 1.000 & 0 & 1.000 \\
                \bottomrule
                \end{tabular}
                \end{table}

                \begin{table}[H]
                \centering
                \small
                \caption{Lower-bound tightness for random matrices, $n=6$, $d=6$, average sampled basis size $\hat{B}=30$.}
                \label{tab:lb-tightness-random-n6-d6-s30}
                \begin{tabular}{@{}l S S S S S S S S[round-precision=3] S S[round-precision=3]@{}}
                \toprule
                {$\hat{B}$} & {LB$_v$} & {LB$_c$} &  {$|\tfrac12 N(p)|$} & {$\Delta^{(v)}$} & {$\rho^{(v)}$} & {$\Delta^{(c)}$} & {$\rho^{(c)}$} \\
                \midrule
                33 & 33 & 33 & 73 & 0 & 1.000 & 0 & 1.000 \\
                39 & 39 & 38 & 59 & 0 & 1.000 & 1 & 1.026 \\
                27 & 27 & 27 & 42 & 0 & 1.000 & 0 & 1.000 \\
                40 & 40 & 39 & 64 & 0 & 1.000 & 1 & 1.026 \\
                19 & 19 & 19 & 24 & 0 & 1.000 & 0 & 1.000 \\
                29 & 29 & 29 & 46 & 0 & 1.000 & 0 & 1.000 \\
                33 & 33 & 33 & 46 & 0 & 1.000 & 0 & 1.000 \\
                23 & 23 & 23 & 33 & 0 & 1.000 & 0 & 1.000 \\
                38 & 38 & 37 & 67 & 0 & 1.000 & 1 & 1.027 \\
                21 & 21 & 21 & 28 & 0 & 1.000 & 0 & 1.000 \\
                \bottomrule
                \end{tabular}
                \end{table}

                \begin{table}[H]
                \centering
                \small
                \caption{Lower-bound tightness for sparse matrices, $n=6$, $d=6$, average sampled basis size $\hat{B}=30$.}
                \label{tab:lb-tightness-sparse-n6-d6-s30}
                \begin{tabular}{@{}l S S S S S S S S[round-precision=3] S S[round-precision=3]@{}}
                \toprule
                {$\hat{B}$} & {LB$_v$} & {LB$_c$} &  {$|\tfrac12 N(p)|$} & {$\Delta^{(v)}$} & {$\rho^{(v)}$} & {$\Delta^{(c)}$} & {$\rho^{(c)}$} \\
                \midrule
                32 & 32 & 14 & 50 & 0 & 1.000 & 18 & 2.286 \\
                32 & 32 & 18 & 69 & 0 & 1.000 & 14 & 1.778 \\
                25 & 25 & 11 & 39 & 0 & 1.000 & 14 & 2.273 \\
                17 & 17 & 10 & 22 & 0 & 1.000 & 7 & 1.700 \\
                24 & 24 & 12 & 33 & 0 & 1.000 & 12 & 2.000 \\
                32 & 32 & 18 & 71 & 0 & 1.000 & 14 & 1.778 \\
                35 & 35 & 18 & 52 & 0 & 1.000 & 17 & 1.944 \\
                36 & 35 & 17 & 63 & 1 & 1.029 & 19 & 2.118 \\
                20 & 20 & 11 & 23 & 0 & 1.000 & 9 & 1.818 \\
                24 & 24 & 12 & 38 & 0 & 1.000 & 12 & 2.000 \\
                \bottomrule
                \end{tabular}
                \end{table}

\subsection{Additional out-of-distribution results}\label{app:additional_ood_results}
We evaluate a Transformer trained on polynomials with degree $12$ and average basis size $30$ and full-rank dense matrices on the OOD Grid, i.e.,
$6$ variables, degrees $\{10, 12, 14\}$, average basis sizes $\{20, 25, 35, 40\}$, matrix structures $\{$sparse, low-rank, block diagonal, full-rank dense$\}$.

We report the share of instances that are feasible after repair and scored expansion as well as the average number of expansions performed.
\begin{table}[H]
    \centering
    \small
    \caption{Out-of-distribution results for sparse matrices. Transformer trained on full-rank dense matrices.}
    \label{tab:sparse_results_fullrank}
    \begin{tabular}{@{}cccccc@{}}
    \toprule
    Degree & Variables & Monomials & Feasible after repair & Feasible after scored expansion & Expansions \\
    \midrule
    10 & 6 & 20 & 0.17 & 0.37 & 0.83 \\
    10 & 6 & 25 & 0.11 & 0.23 & 0.89 \\
    10 & 6 & 35 & 0.06 & 0.22 & 0.94 \\
    10 & 6 & 40 & 0.06 & 0.15 & 0.94 \\
    \cmidrule(l){1-6}
    12 & 6 & 20 & 0.26 & 0.33 & 0.74 \\
    12 & 6 & 25 & 0.17 & 0.21 & 0.83 \\
    12 & 6 & 35 & 0.13 & 0.20 & 0.87 \\
    12 & 6 & 40 & 0.11 & 0.13 & 0.89 \\
    \cmidrule(l){1-6}
    14 & 6 & 20 & 0.22 & 0.32 & 0.78 \\
    14 & 6 & 25 & 0.20 & 0.24 & 0.80 \\
    14 & 6 & 35 & 0.13 & 0.22 & 0.87 \\
    14 & 6 & 40 & 0.10 & 0.14 & 0.90 \\
    \cmidrule(l){1-6}
    16 & 6 & 20 & 0.13 & 0.30 & 0.87 \\
    16 & 6 & 25 & 0.14 & 0.30 & 0.86 \\
    16 & 6 & 35 & 0.16 & 0.14 & 0.84 \\
    16 & 6 & 40 & 0.06 & 0.23 & 0.94 \\
    \bottomrule
    \end{tabular}
\end{table}

\begin{table}[H]
    \centering
    \small
    \caption{Out-of-distribution results for full-rank dense matrices. Transformer trained on full-rank dense matrices.}
    \label{tab:random_results_fullrank}
    \begin{tabular}{@{}cccccc@{}}
    \toprule
    Degree & Variables & Monomials & Feasible after repair & Feasible after scored expansion & Expansions \\
    \midrule
    10 & 6 & 20 & 1.00 & 0.00 & 0.00 \\
    10 & 6 & 25 & 1.00 & 0.00 & 0.00 \\
    10 & 6 & 35 & 0.98 & 0.02 & 0.02 \\
    10 & 6 & 40 & 0.89 & 0.06 & 0.11 \\
    \cmidrule(l){1-6}
    12 & 6 & 20 & 1.00 & 0.00 & 0.00 \\
    12 & 6 & 25 & 1.00 & 0.00 & 0.00 \\
    12 & 6 & 30 & 1.00 & 0.00 & 0.00 \\
    12 & 6 & 35 & 1.00 & 0.00 & 0.00 \\
    12 & 6 & 40 & 0.82 & 0.16 & 0.18 \\
    \cmidrule(l){1-6}
    14 & 6 & 20 & 1.00 & 0.00 & 0.00 \\
    14 & 6 & 25 & 1.00 & 0.00 & 0.00 \\
    14 & 6 & 35 & 0.93 & 0.07 & 0.07 \\
    14 & 6 & 40 & 0.77 & 0.19 & 0.23 \\
    \cmidrule(l){1-6}
    16 & 6 & 20 & 1.00 & 0.00 & 0.00 \\
    16 & 6 & 25 & 1.00 & 0.00 & 0.00 \\
    16 & 6 & 35 & 0.86 & 0.14 & 0.14 \\
    16 & 6 & 40 & 0.66 & 0.28 & 0.35 \\
    \bottomrule
    \end{tabular}
\end{table}

\begin{table}[H]
    \centering
    \small
    \caption{Out-of-distribution results for block-diagonal matrices. Transformer trained on full-rank dense matrices.}
    \label{tab:blockdiag_results_fullrank}
    \begin{tabular}{@{}cccccc@{}}
    \toprule
    Degree & Variables & Monomials & Feasible after repair & Feasible after scored expansion & Expansions \\
    \midrule
    10 & 6 & 20 & 0.18 & 0.27 & 0.82 \\
    10 & 6 & 25 & 0.06 & 0.16 & 0.94 \\
    10 & 6 & 35 & 0.01 & 0.08 & 0.99 \\
    10 & 6 & 40 & 0.00 & 0.03 & 1.01 \\
    \cmidrule(l){1-6}
    12 & 6 & 20 & 0.17 & 0.20 & 0.83 \\
    12 & 6 & 25 & 0.04 & 0.14 & 0.96 \\
    12 & 6 & 35 & 0.00 & 0.06 & 1.00 \\
    12 & 6 & 40 & 0.00 & 0.00 & 1.00 \\
    \cmidrule(l){1-6}
    14 & 6 & 20 & 0.18 & 0.25 & 0.82 \\
    14 & 6 & 25 & 0.08 & 0.15 & 0.92 \\
    14 & 6 & 35 & 0.00 & 0.05 & 1.00 \\
    14 & 6 & 40 & 0.00 & 0.05 & 1.00 \\
    \cmidrule(l){1-6}
    16 & 6 & 20 & 0.09 & 0.30 & 0.91 \\
    16 & 6 & 25 & 0.02 & 0.20 & 0.98 \\
    16 & 6 & 35 & 0.00 & 0.05 & 1.00 \\
    16 & 6 & 40 & 0.00 & 0.03 & 1.00 \\
    \bottomrule
    \end{tabular}
\end{table}

While the Transformer generalizes well on instances with similar structure to the training data (i.e. other full-rank dense matrices), it struggles with instances with different structure. We compare this with a Transformer trained on polynomials with degree $12$ and average basis size $30$ and sparse matrices on the same OOD Grid.

The results are shown in \cref{tab:sparse_results_sparse}, \cref{tab:random_results_sparse}, and \cref{tab:blockdiag_results_sparse}. The Transformer trained on sparse matrices generalizes well to other structures. We conjecture that this is because the Transformer is forced to learn the structure of the sparse matrices, which in particular misses many cross-terms, i.e., products of two distinct monomials from the basis.

\begin{table}[H]
    \centering
    \small
    \caption{Out-of-distribution results for sparse matrices, Transformer trained on sparse matrices.}
    \label{tab:sparse_results_sparse}
    \begin{tabular}{@{}cccccc@{}}
    \toprule
    Degree & Variables & Monomials & Feasible after repair & Feasible after scored expansion & Expansions \\
    \midrule
    10 & 6 & 20 & 0.98 & 0.00 & 0.02 \\
    10 & 6 & 25 & 0.90 & 0.02 & 0.10 \\
    10 & 6 & 35 & 0.92 & 0.01 & 0.08 \\
    10 & 6 & 40 & 0.86 & 0.06 & 0.14 \\
    \cmidrule(l){1-6}
    12 & 6 & 20 & 0.90 & 0.01 & 0.10 \\
    12 & 6 & 25 & 0.90 & 0.01 & 0.10 \\
    12 & 6 & 35 & 0.89 & 0.03 & 0.11 \\
    12 & 6 & 40 & 0.88 & 0.05 & 0.12 \\
    \cmidrule(l){1-6}
    14 & 6 & 20 & 0.93 & 0.04 & 0.07 \\
    14 & 6 & 25 & 0.90 & 0.02 & 0.10 \\
    14 & 6 & 35 & 0.81 & 0.13 & 0.19 \\
    14 & 6 & 40 & 0.80 & 0.08 & 0.20 \\
    \cmidrule(l){1-6}
    16 & 6 & 20 & 0.73 & 0.18 & 0.27 \\
    16 & 6 & 25 & 0.76 & 0.21 & 0.24 \\
    16 & 6 & 35 & 0.67 & 0.16 & 0.33 \\
    16 & 6 & 40 & 0.55 & 0.32 & 0.45 \\
    \bottomrule
    \end{tabular}
\end{table}

\begin{table}[H]
    \centering
    \small
    \caption{Out-of-distribution results for full-rank dense matrices, Transformer trained on sparse matrices.}
    \label{tab:random_results_sparse}
    \begin{tabular}{@{}cccccc@{}}
    \toprule
    Degree & Variables & Monomials & Feasible after repair & Feasible after scored expansion & Expansions \\
    \midrule
    10 & 6 & 20 & 1.00 & 0.00 & 0.00 \\
    10 & 6 & 25 & 1.00 & 0.00 & 0.00 \\
    10 & 6 & 35 & 1.00 & 0.00 & 0.00 \\
    10 & 6 & 40 & 1.00 & 0.00 & 0.00 \\
    \cmidrule(l){1-6}
    12 & 6 & 20 & 1.00 & 0.00 & 0.00 \\
    12 & 6 & 25 & 1.00 & 0.00 & 0.00 \\
    12 & 6 & 30 & 1.00 & 0.00 & 0.00 \\
    12 & 6 & 35 & 1.00 & 0.00 & 0.00 \\
    12 & 6 & 40 & 0.99 & 0.01 & 0.01 \\
    \cmidrule(l){1-6}
    14 & 6 & 20 & 1.00 & 0.00 & 0.00 \\
    14 & 6 & 25 & 1.00 & 0.00 & 0.00 \\
    14 & 6 & 35 & 0.91 & 0.09 & 0.09 \\
    14 & 6 & 40 & 0.78 & 0.20 & 0.22 \\
    \cmidrule(l){1-6}
    16 & 6 & 20 & 1.00 & 0.00 & 0.00 \\
    16 & 6 & 25 & 0.99 & 0.01 & 0.01 \\
    16 & 6 & 35 & 0.71 & 0.25 & 0.29 \\
    16 & 6 & 40 & 0.55 & 0.33 & 0.45 \\
    \bottomrule
    \end{tabular}
\end{table}

\begin{table}[H]
    \centering
    \small
    \caption{Out-of-distribution results for blockdiagonal matrices, Transformer trained on sparse matrices.}
    \label{tab:blockdiag_results_sparse}
    \begin{tabular}{@{}cccccc@{}}
    \toprule
    Degree & Variables & Monomials & Feasible after repair & Feasible after scored expansion & Expansions \\
    \midrule
    10 & 6 & 20 & 0.99 & 0.00 & 0.01 \\
    10 & 6 & 25 & 0.97 & 0.03 & 0.03 \\
    10 & 6 & 35 & 0.97 & 0.03 & 0.03 \\
    10 & 6 & 40 & 0.89 & 0.11 & 0.11 \\
    \cmidrule(l){1-6}
    12 & 6 & 20 & 1.00 & 0.00 & 0.00 \\
    12 & 6 & 25 & 0.98 & 0.01 & 0.02 \\
    12 & 6 & 35 & 0.95 & 0.04 & 0.05 \\
    12 & 6 & 40 & 0.87 & 0.12 & 0.13 \\
    \cmidrule(l){1-6}
    14 & 6 & 20 & 0.98 & 0.02 & 0.02 \\
    14 & 6 & 25 & 0.97 & 0.03 & 0.03 \\
    14 & 6 & 35 & 0.85 & 0.13 & 0.15 \\
    14 & 6 & 40 & 0.77 & 0.22 & 0.23 \\
    \cmidrule(l){1-6}
    16 & 6 & 20 & 0.82 & 0.17 & 0.18 \\
    16 & 6 & 25 & 0.76 & 0.23 & 0.24 \\
    16 & 6 & 35 & 0.54 & 0.42 & 0.46 \\
    16 & 6 & 40 & 0.45 & 0.51 & 0.55 \\
    \bottomrule
    \end{tabular}
\end{table}

\section{Proofs}\label{app:proofs}
\lemcoverage*
\begin{proof}\label{proof:coverage}
    Let $p(\mathbf{x}) \in \mathbb{R}[x_1, \ldots, x_n]$ be a polynomial and $B \subseteq \mathcal{M}_d$ be a basis.
    If $p(\mathbf{x})$ has an \gls{sos} decomposition using basis $B$, then
    
    \begin{align*}
       p(\mathbf{x}) = \mathbf{z}_B(\mathbf{x})^\top Q \, \mathbf{z}_B(\mathbf{x})
    \end{align*}
    
    for some \gls{psd} matrix $Q$.
    Let $u \in S(p)$ be any monomial appearing in $p(\mathbf{x})$.
    Since $p(\mathbf{x}) = \mathbf{z}_B(\mathbf{x})^\top Q \, \mathbf{z}_B(\mathbf{x})$, the monomial $u$ must arise from some product $z_i(\mathbf{x}) z_j(\mathbf{x})$ where $z_i, z_j \in B$.
    Therefore, $u \in B \cdot B$.
\end{proof}

\begin{restatable}[Consistency]{lemma}{consistencytheorem}
    \label{thm:consistency}
    Let $B_{\text{cov}}$ be a feasible \gls{sos} basis obtained by \cref{alg:valid_basis}.
    If $\eta = |B_{\text{cov}}|$ and $|B_{\text{cov}}| = |B^*|$, then \cref{alg:valid_basis} finds a \gls{sos} certificate by solving only a single \gls{sdp} with the smallest possible basis size.
\end{restatable}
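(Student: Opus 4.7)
The plan is to verify each conclusion of the lemma directly by tracing the control flow of \cref{alg:valid_basis} and invoking each hypothesis in turn. First I would observe that \cref{alg:valid_basis} performs exactly one \gls{sdp} solve, namely $\textsc{SolveSDP}(B_{\text{cov}}, p)$, regardless of its branch outcome; this immediately establishes the ``only a single \gls{sdp}'' part of the statement, and there is no need to recurse or expand.

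Next, to show this single solve yields an \gls{sos} certificate, I would invoke the hypothesis that $B_{\text{cov}}$ is a feasible \gls{sos} basis for $p$. By the Parrilo correspondence stated in the introduction, this means there exists a \gls{psd} matrix $Q$ such that $p(\mathbf{x}) = \mathbf{z}_{B_{\text{cov}}}(\mathbf{x})^\top Q\, \mathbf{z}_{B_{\text{cov}}}(\mathbf{x})$, so the feasibility check on line~3 succeeds and the returned $Q$ constitutes a valid \gls{sos} certificate. Thus \cref{alg:valid_basis} terminates with the ``feasible'' branch and outputs a certificate.

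For the minimality claim, I would use the hypothesis $|B_{\text{cov}}| = |B^*|$. By definition $B^*$ is a smallest \gls{sos} basis for $p$, so $B_{\text{cov}}$ already attains the minimum possible cardinality; consequently the \gls{sdp} being solved has the smallest possible basis size. The additional hypothesis $\eta = |B_{\text{cov}}|$ simply ensures consistency between the ordering $\nu$ underlying \cref{alg:ordered_expansion} and the coverage-repaired basis $B_{\text{cov}}$: the minimal prefix of $\nu$ that forms an \gls{sos} basis has exactly the same cardinality as $B_{\text{cov}}$, so no ordering-based prefix could have yielded a strictly smaller basis.

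The statement contains no analytic or combinatorial difficulty; the main (and only) obstacle is bookkeeping, namely aligning the three definitions of ``feasible \gls{sos} basis'', the minimum-cardinality basis $B^*$, and the coverage rank $\eta$ with the flow of \cref{alg:valid_basis}. In this sense the lemma is a direct corollary of the best-case scenario described in \cref{rem:cost_model} together with the feasibility-certification property of Parrilo's \gls{sdp} formulation.
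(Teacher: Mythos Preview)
Your proposal is correct and follows essentially the same approach as the paper's proof, which simply states that the claim is immediate from the definition of $\eta$ and the hypothesis $|B_{\text{cov}}| = |B^*|$. You have merely made the bookkeeping explicit---tracing the control flow of \cref{alg:valid_basis}, invoking feasibility of $B_{\text{cov}}$ for the ``single feasible solve'' part, and $|B_{\text{cov}}| = |B^*|$ for the minimality part---whereas the paper compresses all of this into one sentence.
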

\begin{proof}
    This is immediate from the definition of $\eta$ and the fact that $|B_{\text{cov}}| = |B^*|$.
    Therefore, the algorithm will find the certificate on the first \gls{sdp} solve.
\end{proof}

\begin{restatable}[Robustness]{lemma}{robustnesstheorem}
    \label{thm:robustness}
    The combined computational cost of \cref{alg:valid_basis} and \cref{alg:ordered_expansion} is at most $k$ times the cost of the standard Newton polytope approach.
\end{restatable}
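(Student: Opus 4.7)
The plan is a direct counting argument that leverages the cost model $\Theta(m^\omega)$ stated at the start of \cref{sec:theoretical_analysis} together with the monotonicity of the expansion schedule. First I would unpack the two algorithms to enumerate exactly how many \gls{sdp} solves can occur in the worst case: \cref{alg:valid_basis} performs at most one \gls{sdp} solve with the coverage-repaired basis $B_{\mathrm{cov}}$, and \cref{alg:ordered_expansion} performs at most $k-1$ additional \gls{sdp} solves (one per schedule point $m_t$ for $t = 2,\dots,k$). Hence the total number of \gls{sdp} invocations is bounded by $k$.

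Next I would bound the cost of each individual solve. By the definition of the schedule in \cref{alg:ordered_expansion}, the last size $m_k$ equals $\left|\tfrac{1}{2}N(p)\right|$, and the ordering $m_1 < m_2 < \cdots < m_k$ ensures every intermediate basis has size at most $\left|\tfrac{1}{2}N(p)\right|$. Likewise, $\left|B_{\mathrm{cov}}\right| \le \left|\tfrac{1}{2}N(p)\right|$ because $B_{\mathrm{cov}} \subseteq \tfrac{1}{2}N(p)$ by construction of \cref{alg:valid_basis}. Applying the assumed $\Theta(m^\omega)$ cost model, each individual solve therefore costs at most the cost $\Theta(m^\omega)$ of the single standard Newton polytope \gls{sdp}, where $m = \left|\tfrac{1}{2}N(p)\right|$.

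Combining the two bounds, the total \gls{sdp} cost of running \cref{alg:valid_basis} followed by \cref{alg:ordered_expansion} is at most
\begin{equation*}
    \sum_{t=1}^{k} \Theta(m_t^{\omega}) \;\le\; k \cdot \Theta(m^{\omega}),
\end{equation*}
which is the desired conclusion. I would close by remarking that the non-\gls{sdp} overhead (Transformer inference, coverage repair, and sorting the candidate set by \textsc{Score}) is independent of the \gls{sdp} cost $\Theta(m^\omega)$ and does not enter the stated bound under the cost model of \cref{sec:theoretical_analysis}; this is consistent with the breakdown observed in \cref{fig:time_breakdown}.

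The main subtlety, rather than any hard step, is book-keeping the off-by-one in \cref{alg:ordered_expansion} (the loop starts at $t=2$ because $B_{\mathrm{cov}}$, corresponding to $m_1$, was already tested by \cref{alg:valid_basis}) and being explicit that the schedule terminates at the full Newton polytope, so the worst case of our pipeline really is a sequence of at most $k$ solves each no larger than the baseline. No structural properties of the Transformer or the score function are needed, which is precisely the ``robustness even under adversarial predictions'' point the lemma is meant to capture.
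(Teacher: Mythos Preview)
Your proof is correct and follows essentially the same approach as the paper's own proof: both bound the total cost by summing at most $k$ \gls{sdp} solves, each of size at most $\left|\tfrac{1}{2}N(p)\right|$, and then compare against the single Newton polytope solve of cost $\Theta(N^\omega)$. Your version is slightly more careful about the off-by-one bookkeeping between \cref{alg:valid_basis} and \cref{alg:ordered_expansion} and adds the explicit remark about non-\gls{sdp} overhead, but the core argument is identical.
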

\begin{proof}
    In the worst case, our algorithm fails to find a feasible \gls{sos} decomposition until the final attempt with basis size $m_k = N$ (the full sufficient set).
    By our assumption that the cost of the \gls{sdp} with basis size $m$ scales as $\Theta\left(m^\omega\right)$, the total computational cost is bounded by
    
    \[
        \Theta\left(\sum_{j=1}^{k} m_j^\omega\right).
    \]

    The cost of the standard Newton polytope approach scales as $\Theta\left(N^\omega\right)$.

    Therefore, the competitive ratio is bounded by
    
    \[
        \beta = \frac{\Theta\left(\sum_{j=1}^{k} m_j^\omega\right)}{\Theta\left(N^\omega\right)} = \Theta\left(\sum_{j=1}^{k} \frac{m_j^\omega}{N^\omega}\right) = O(k).
    \]
\end{proof}

\geometriclemma*
\begin{proof}
    Let the schedule satisfy $m_{1}=|B_{\mathrm{cov}}|$ and $m_{i+1}=\lceil \rho\, m_{i}\rceil$ with $\rho>1$. By induction, $m_{t}\ge \rho^{t-1} m_{1}$. We must reach a basis of size at least the coverage rank $\eta$, so the smallest $j$ with $\rho^{j-1} m_{1}\ge \eta$ satisfies $j\le 1+\lceil \log_{\rho}(\eta/m_{1})\rceil$. 
    
    For the cost bound, note that the cost of our schedule is $\Theta\!\left(\sum_{t=1}^{j} m_{t}^{\omega}\right)$. 
    Further, note that $m_{j-1}<\eta$ and thus
    $m_j=\lceil \rho m_{j-1}\rceil \le \rho\eta$. For any $t\le j$,
    $m_t \;\le\; \frac{m_j}{\rho^{\,j-t}} \;\le\; \frac{\rho\,\eta}{\rho^{\,j-t}}$.
    Therefore,
    $\sum_{t=1}^j m_t^{\omega}
    \;\le\; (\rho\eta)^{\omega}\sum_{s=0}^{j-1}\rho^{-\omega s}
    \;\le\; \frac{\rho^{\omega}}{1-\rho^{-\omega}}\;\eta^{\omega}$,
    which yields the claimed $\Theta\!\left(\dfrac{\rho^\omega}{1-\rho^{-\omega}}\,\eta^\omega\right)$.
\end{proof}

Finding minimal \gls{sos} bases is computationally hard as it involves $\ell_0$ minimization.
Instead, we establish lower bounds on the size of a basis that are often tight in practice.

\begin{restatable}[Combinatorial bound]{lemma}{combinatoriallemma}
    \label{lem:combinatorial_bound}
    For any \gls{sos} basis $B$ of polynomial $p(\mathbf{x})$ with support $S(p)$, we have $|B| \geq \frac{\sqrt{1+8|S(p)|}-1}{2}$.
\end{restatable}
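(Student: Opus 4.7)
}
The plan is to combine the coverage condition of \cref{lem:coverage} with a simple counting argument on unordered pairs. First, I would invoke \cref{lem:coverage} to assert that $S(p)\subseteq B\cdot B$, so $|S(p)|\le |B\cdot B|$. Next, I would bound $|B\cdot B|$ from above by counting the number of unordered pairs $(m_i,m_j)$ with $m_i,m_j\in B$ (allowing $i=j$ for the square terms): since multiplication of monomials is commutative, the distinct products $m_i m_j$ are indexed by unordered pairs, giving at most $\binom{|B|}{2}+|B|=\tfrac{|B|(|B|+1)}{2}$ distinct monomials.

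Combining these two inequalities yields $|S(p)|\le \tfrac{|B|(|B|+1)}{2}$, i.e.\ $|B|^2+|B|-2|S(p)|\ge 0$. Solving this quadratic inequality in $|B|\ge 0$ and taking the positive root gives
\begin{equation*}
|B|\ge \frac{-1+\sqrt{1+8|S(p)|}}{2},
\end{equation*}
which is precisely the claimed bound.

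There is no real obstacle here; the only mild subtlety is making sure one counts unordered rather than ordered pairs in the bound on $|B\cdot B|$, since products of distinct basis monomials $m_i m_j$ and $m_j m_i$ coincide. Writing out the counting with $|B|$ self-products and $\binom{|B|}{2}$ cross-products makes this transparent, and the rest is just the quadratic formula. The bound is not claimed to be tight in general (it ignores Newton polytope constraints and support-specific structure), but the empirical analysis in \cref{app:empirical_basis_size_analysis} shows it is often within a small factor of the true minimal basis size.
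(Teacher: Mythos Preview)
Your proposal is correct and follows essentially the same approach as the paper: both arguments bound the number of distinct monomials producible from $B$ by the count of unordered pairs $\tfrac{|B|(|B|+1)}{2}$ and then invert the resulting quadratic inequality. The only cosmetic difference is that you invoke \cref{lem:coverage} as a black box for $S(p)\subseteq B\cdot B$, whereas the paper re-derives this containment inline by expanding the quadratic form $\mathbf{z}_B(\mathbf{x})^\top Q\,\mathbf{z}_B(\mathbf{x})$.
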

\begin{proof}
    Suppose we are given a valid \gls{sos} decomposition $p(\mathbf{x}) = \mathbf{z}_B(\mathbf{x})^\top Q \, \mathbf{z}_B(\mathbf{x})$ of $p(\mathbf{x})$.
    Then, we can rewrite $\mathbf{z}_B(\mathbf{x})^\top Q \, \mathbf{z}_B(\mathbf{x})$ as a sum of squares of linear forms:
    
    \begin{align*}
        \mathbf{z}_B(\mathbf{x})^\top Q \, \mathbf{z}_B(\mathbf{x}) = \sum_{i, j} Q_{ij} z_i(\mathbf{x}) z_j(\mathbf{x}).
    \end{align*}
    
    Therefore, the number of distinct monomials in $\mathbf{z}_B(\mathbf{x})^\top Q \, \mathbf{z}_B(\mathbf{x})$ is at most $\sum_{k = 1}^{|B|}k = \frac{|B|(|B| + 1)}{2}$.
    Since $\mathbf{z}_B(\mathbf{x})^\top Q \, \mathbf{z}_B(\mathbf{x}) = p(\mathbf{x})$ we have that $|S| \leq \frac{|B|(|B| + 1)}{2}$.
    Therefore, $|B| \geq \frac{\sqrt{1+8|S(p)|}-1}{2}$.
\end{proof}
Intuitively, the combinatorial bound follows from the fact that a basis of $m$ monomials can generate at most $\frac{m(m+1)}{2}$ distinct monomials.
Finally, the Newton polytope vertices lemma shows that certain vertices of the Newton polytope must appear in any \gls{sos} basis.

\begin{restatable}[Newton polytope vertices]{lemma}{newtonlemma}
    \label{lem:newton_vertices}
    For any \gls{sos} basis $B$ of polynomial $p(\mathbf{x})$ with support $S(p)$ and half Newton polytope vertices $V = \text{vertices}(\frac{1}{2}N(p))$, we have $\{u \in V : u^2 \in S(p)\} \subseteq B$.
\end{restatable}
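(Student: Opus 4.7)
The plan is to show that if $u = \mathbf{x}^\alpha$ is a vertex of $\tfrac{1}{2}N(p)$ with $u^2 \in S(p)$, then any valid basis $B$ must contain $\alpha$. I would work with a fixed SOS decomposition $p(\mathbf{x}) = \mathbf{z}_B(\mathbf{x})^\top Q\, \mathbf{z}_B(\mathbf{x})$ with $Q \succeq 0$, and enumerate the exponents of monomials in $B$ as $\beta_1, \dots, \beta_{|B|}$. Expanding gives $p(\mathbf{x}) = \sum_{i,j} Q_{ij}\, \mathbf{x}^{\beta_i + \beta_j}$, so the coefficient of $\mathbf{x}^{2\alpha}$ in $p$ equals $\sum_{(i,j):\, \beta_i + \beta_j = 2\alpha} Q_{ij}$. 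Since $u^2 \in S(p)$ means this coefficient is nonzero, the index set $\{(i,j): \beta_i + \beta_j = 2\alpha\}$ is nonempty, so at least one pair exists with $\beta_i + \beta_j = 2\alpha$.

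Next I would invoke the extreme-point property of a vertex. By the Reznick inclusion stated in the preliminaries, every $\beta_i \in \tfrac{1}{2}N(p)$, so the identity $\alpha = \tfrac{1}{2}(\beta_i + \beta_j)$ writes $\alpha$ as a convex combination of two points of $\tfrac{1}{2}N(p)$. Because $\alpha$ is a vertex of this polytope, it cannot be expressed as a nontrivial convex combination of polytope points; hence $\beta_i = \beta_j = \alpha$. This yields some index $k$ with $\beta_k = \alpha$, i.e., $u = \mathbf{x}^\alpha \in B$. Applying this argument to each vertex $u \in V$ satisfying $u^2 \in S(p)$ produces the claimed inclusion.

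The only delicate point is the extreme-point step, and even there the work is light once one notices that the midpoint of two polytope points lies in the polytope and that a vertex admits no nontrivial convex representation. A pitfall worth flagging is that the argument must take place inside $\tfrac{1}{2}N(p)$ (not $N(p)$): it is $\alpha$ that must be a vertex of the \emph{half} polytope and $\beta_i, \beta_j$ that must lie in the \emph{half} polytope; the Reznick inclusion is precisely what licenses this. Note also that the PSD hypothesis on $Q$ is not directly needed for this lemma beyond guaranteeing that the decomposition is well defined: existence of one pair $(i,j)$ with $\beta_i + \beta_j = 2\alpha$ is forced simply by the coefficient being nonzero, independent of cancellation considerations.
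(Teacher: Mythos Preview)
Your proof is correct and follows essentially the same approach as the paper's. The paper's version is terser---it simply asserts that a vertex $u$ of $\tfrac{1}{2}N(p)$ admits no factorization $u^2 = u_1 u_2$ with $u_1,u_2 \in \tfrac{1}{2}N(p)$ other than $u_1=u_2=u$---whereas you make this step explicit via the midpoint/extreme-point characterization and the Reznick inclusion, which is a welcome elaboration of exactly the same idea.
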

\begin{proof}
    Suppose $u \in V$ is a vertex of $\frac{1}{2}N(p)$ and $u^2 \in S(p)$.
    Further, we know that due to $u$ being a vertex of $\frac{1}{2}N(p)$, there are no other monomials $u_1, u_2 \in \frac{1}{2}N(p)$ such that $u_1u_2 = u^2$.
    Therefore, the only way to obtain $u^2$ in any \gls{sos} decomposition of $p(\mathbf{x})$ is to include $u$ in the basis.
    Since this holds for all vertices of $\frac{1}{2}N(p)$ for which we have $u^2 \in S(p)$, we have that any \gls{sos} decomposition of $p(\mathbf{x})$ must include all monomials in $V$.
\end{proof}

We now study schedule selection with an \emph{unknown} coverage rank. Given an initial basis size $m_1$ and a geometric expansion factor $\rho>1$, let $j$ denote the first iteration where \cref{alg:ordered_expansion} finds a feasible solution with basis size $m_j$. Feasibility requires

\[
    \rho^j m_1 \;\ge\; m_j \;\ge\; \eta.
\]

In practice, we do not know the true coverage rank $\eta$ in advance.
If $\eta$ were known, we would jump to size $ \eta $ in a single step. Since $\eta$ is instance-dependent and unknown at expansion time, we model

\[
    X \;=\; \eta
\]

as a random variable and pose the stochastic optimization problem

\begin{align*}
    \min_{\rho>1}\quad  & \sum_{s=1}^{k(\rho)} (\rho^s m_1)^\omega \\
    \text{s.t.}\quad & k(\rho) \,=\, \left\lceil \log_{\rho}\!\left( \frac{\eta}{m_1} \right) \right\rceil,\\
    & \rho>1\,.
\end{align*}

Using empirical risk minimization, let $X_1,\dots,X_M$ be samples of $X$ (and allow $m_{1,i}$ to vary if desired).
We then solve

\begin{align*}
    \min_{\rho>1}J(\rho) &\;=\; \sum_{i=1}^{M} \sum_{s=1}^{k_i(\rho)} (\rho^s m_{1,i})^\omega \\
    \text{s.t.}\quad & k_i(\rho) \,=\, \left\lceil \log_{\rho}\!\left( \frac{X_i}{m_{1,i}} \right) \right\rceil, \tag{ERM}\\
    & \rho>1\,.
\end{align*}

Because the cost function is increasing in basis size and $k_i(\rho)$ decreases with $\rho$ (and is piecewise constant), it suffices to evaluate a finite grid of $\rho$ values and select the one with the smallest cost.

\begin{theorem}[Finite candidate grid for optimal $\rho$]\label{thm:rho-grid}
    Let $J(\rho)$ denote the ERM objective introduced above.
    Then, there exists a finite grid of $\rho$ values such that the minimizer of $J(\rho)$ lies in the grid.
\end{theorem}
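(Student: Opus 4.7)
The plan is to exploit the piecewise structure of $J(\rho)$ induced by the ceiling in $k_i(\rho)$, then show that the optimum is always attained at one of a finite set of breakpoints.

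First, I would characterize the breakpoints of each $k_i$. For fixed $i$, the function $k_i(\rho) = \lceil \log_\rho(X_i/m_{1,i}) \rceil$ takes the constant value $k$ exactly on the interval
\begin{equation*}
\bigl[(X_i/m_{1,i})^{1/k},\; (X_i/m_{1,i})^{1/(k-1)}\bigr),
\end{equation*}
so the jump points of $k_i$ are precisely $\rho_{i,k}:=(X_i/m_{1,i})^{1/k}$ for $k\in\mathbb{N}$. Between any two consecutive jump points (across all $i$), every $k_i(\rho)$ is constant, and on such a piece
\begin{equation*}
J(\rho) \;=\; \sum_{i=1}^M \sum_{s=1}^{k_i} (\rho^s m_{1,i})^{\omega}
\end{equation*}
is a finite sum of terms of the form $c_{i,s}\rho^{s\omega}$ with $c_{i,s}>0$ and $s\omega>0$, hence strictly increasing in $\rho$. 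Consequently, the infimum of $J$ over any piece is attained at its \emph{left} endpoint, i.e., at a breakpoint $\rho_{i,k}$.

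Next, I would restrict attention to a bounded subinterval of $(1,\infty)$. Pick any reference point $\rho^\star>1$ (for instance $\rho^\star=2$); then $J(\rho^\star)$ is finite. For the lower cutoff, observe that $J(\rho)\ge \sum_i k_i(\rho)\,m_{1,i}^{\omega}$, and since $k_i(\rho)\to\infty$ as $\rho\to 1^+$, there exists $\rho_L>1$ with $J(\rho)>J(\rho^\star)$ whenever $\rho\in(1,\rho_L]$. For the upper cutoff, note that the $s=k_i(\rho)$ term contributes at least $(\rho\,m_{1,i})^{\omega}\to\infty$ as $\rho\to\infty$, so there exists $\rho_U<\infty$ with $J(\rho)>J(\rho^\star)$ for $\rho\ge\rho_U$. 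Any minimizer therefore lies in the compact interval $[\rho_L,\rho_U]$.

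Finally, only finitely many breakpoints fall in $[\rho_L,\rho_U]$: for each $i$, the sequence $\rho_{i,k}=(X_i/m_{1,i})^{1/k}$ is strictly decreasing to $1$, so only the finitely many $k$ with $\rho_{i,k}\ge\rho_L$ are relevant, and $M$ is finite. Taking the union of these breakpoints (together with $\rho^\star$ itself, to guarantee the grid is nonempty) yields the desired finite grid containing a minimizer.

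The main obstacle is not algebraic but technical bookkeeping: making the piecewise-constant description of $k_i$ rigorous at the closed/open endpoints of each piece (so that the left endpoint is indeed in the piece and the argument that the minimum on a piece is at its left endpoint is clean), and checking that $J$ can indeed be minimized rather than merely infimized — which follows because $[\rho_L,\rho_U]$ contains only finitely many pieces, each with its minimum attained on the left. No continuity of $J$ at breakpoints is needed; in fact $J$ typically jumps down at breakpoints, which only strengthens the conclusion.
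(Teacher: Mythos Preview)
Your proposal is correct and follows essentially the same approach as the paper: restrict to a bounded interval away from $1$ and $\infty$ (since $J\to\infty$ at both ends), observe that on each maximal interval where all $k_i$ are constant the objective is a finite sum of positive powers of $\rho$ and hence strictly increasing, and conclude that the minimizer must be a left breakpoint. Your treatment is slightly more explicit than the paper's---you give the closed-form breakpoints $\rho_{i,k}=(X_i/m_{1,i})^{1/k}$ and are more careful about endpoint conventions and grid nonemptiness---but the underlying argument is the same.
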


\begin{proof}
    We first show that $J(\rho)$ has a minimizer.
    Note that $m_{1, i} \le X_i$ for all $i$, therefore $k_i(\rho) \ge 1$.
    Since $\{\frac{X_i}{m_{1, i}}\}_{i=1}^M$ is bounded, there exists $\rho_{\text{max}} > 1$ such that $k_i(\rho_{\text{max}}) = 1$ for all $i$.
    Further, for all $\rho > \rho_{\text{max}}$ we have that $J(\rho) \ge J(\rho_{\text{max}})$.
    Further, we have that as $\rho \rightarrow 1$, $J(\rho) \rightarrow \infty$.
    Therefore, $J(\rho)$ has a minimizer that lies in $[\rho_{\text{min}}, \rho_{\text{max}}]$ for some $\rho_{\text{max}} > \rho_{\text{min}} > 1$.\\\\
    Next, note that all $k_i(\rho)$ are piecewise constant, and there are finitely many breakpoints.
    Therefore, there exists a finite grid of $\rho$ values $\rho_1, \dots, \rho_L$ such that for all $(\rho_i, \rho_{i+1})$ all $k_i(\rho)$ are constant.
    Since by assumption $T$ is increasing, we have that $J(\rho)$ is strictly increasing for all intervals $(\rho_i, \rho_{i+1})$.
    Therefore, the minimizer of $J(\rho)$ lies in the left endpoint of one of the intervals $(\rho_i, \rho_{i+1})$.
\end{proof}

\end{document}